\newtheorem{theorem}{Theorem}
\newtheorem{remark}{Remark}
\newtheorem{lemma}{Lemma}
\newtheorem{definition}{Definition}
\DeclareSymbolFont{bbold}{U}{bbold}{m}{n}
\DeclareSymbolFontAlphabet{\mathbbold}{bbold}
\newcommand{\cE}{\mathcal{E}}
\newcommand{\cF}{\mathcal{F}}
\newcommand{\cL}{\mathcal{L}}
\newcommand{\cR}{\mathcal{R}}
\newcommand{\cS}{\mathcal{S}}
\newcommand{\cX}{\mathcal{X}}
\newcommand{\cY}{\mathcal{Y}}
\newcommand{\cZ}{\mathcal{Z}}
\newcommand{\boldh}{\mathbf{h}}
\newcommand{\boldq}{\mathbf{q}}
\newcommand{\boldv}{\mathbf{v}}
\newcommand{\boldw}{\mathbf{w}}
\newcommand{\boldx}{\mathbf{x}}
\newcommand{\x}{\mathbf{x}}
\newcommand{\D}{\mathcal{D}(\mathcal{Y})}
\newcommand{\kl}{\mathsf{KL}}
\renewcommand{\boldsymbol}[1]{\mathbf{#1}}
\begin{document}

%

%

\twocolumn[

\aistatstitle{Online Distribution Learning with Local Private Constraints}

\aistatsauthor{ Jin Sima$^*$ \And Changlong Wu$^*$ \And Olgica Milenkovic \And Wojciech Szpankowski
}

\aistatsaddress{UIUC \And  Purdue University \And UIUC \And Purdue University}]

\begin{abstract}
  We study the problem of online conditional distribution estimation with \emph{unbounded} label sets under local differential privacy. Let $\mathcal{F}$ 
  be a distribution-valued function class with unbounded label set.
  We aim at estimating  an \emph{unknown} function $f\in \mathcal{F}$ in an online fashion so that at time $t$ when the context $\x_t$ is provided we can generate an estimate 
  of $f(\x_t)$ under KL-divergence  
  knowing only a \emph{privatized} version of the true \emph{labels} sampling from $f(\x_t)$. The ultimate objective  is to minimize the cumulative KL-risk of a finite horizon $T$. We show that under $(\epsilon,0)$-local differential privacy of the privatized labels, the KL-risk grows as $\tilde{\Theta}(\frac{1}{\epsilon}\sqrt{KT})$ upto poly-logarithmic factors where $K=|\mathcal{F}|$. This is in stark contrast to the $\tilde{\Theta}(\sqrt{T\log K})$ bound demonstrated by~\citet{wu2023learning} for \emph{bounded} label sets. As a byproduct, our results  recover a nearly tight upper bound for the hypothesis selection problem of~\citet{gopi2020locally} established only for the \emph{batch} setting.
\end{abstract}

\section{Introduction}

Online conditional distribution learning (a.k.a., sequential probability assignments)~\citep{shtarkov87,xb97,merhav95,lugosi-book,bilodeau2021minimax} is a fundamental problem that arises in many different application contexts, including universal source coding~\citep{xb97,merhav95,ds04,sw12}, portfolio optimization~\citep{lugosi-book}, and more recently, the interactive decision making~\citep{foster2021statistical}. Let $\mathcal{X}$ be a context (or features), $\mathcal{Y}$ be a set of labels, and $\mathcal{D}(\mathcal{Y})$ be the set of all probability distributions over $\mathcal{Y}$. For any distribution-valued class $\mathcal{F}\subset \mathcal{D}(\mathcal{Y})^{\mathcal{X}}$, the problem is formulated as a game between \emph{Nature} and \emph{learner} with the following protocol: at beginning Nature selects $f\in\mathcal{F}$; at each time step $t$ Nature selects a context $\x_t$; the learner then predicts $\hat{p}_t\in\mathcal{D}(\mathcal{Y})$ for the distribution of the next label; Nature then generates  $p_t=f(\x_t)$, samples $y_t\sim p_t$ and reveals label $y_t$ to the learner. The goal is to minimize the cumulative risk
$\sum_{t=1}^T\mathsf{D}(f(\x_t),\hat{p}_t)$, for appropriately selected divergence function $\mathsf{D}$.

It can be shown that for \emph{any} finite class $\mathcal{F}$ the cumulative risk is upper bounded by $\log|\mathcal{F}|$ if we take $\mathsf{D}$ to be the KL-divergence. There are many other work that extend this basic setup to different formulations, including non-parametric infinite classes, miss-specified setting, stochastic feature generation scenarios and computational efficient methods~\citep{rakhlin2015sequential,bhatt2021sequential,bilodeau2021minimax,bilodeau2020relaxing,wu2022expected, bhatt2023smoothed}.

This paper investigates a different angle for this fundamental problem in which  data are shared \emph{privately} to the learner. We employ the concept of \emph{local differential privacy} (LDP) for privatizing the \emph{labels} $y_t$s shared to the learner. Formally, we consider the following game between \emph{three} parties, Nature, learner and the \emph{users}: (i) at beginning Nature selects $f\in \mathcal{F}$; (ii) Nature at each time step $t$  selects $\x_t$ and reveals to learner; learner makes prediction $\hat{p}_t\in \mathcal{D}(\mathcal{Y})$; (iii) the \emph{user} then samples $y_t\sim f(\x_t)$ but reveals a privatized version $\Tilde{Y}_t$ to the learner. Our main focus is on minimizing  the following KL-risk $\sum_{t=1}^T\mathsf{KL}(f(\x_t),\hat{p}_t)$ subject to local differential privacy constrains on the $\tilde{Y}_t$s, where $\mathsf{KL}$ is the KL-divergence.

\subsection{Results and Techniques}
Our main results of this paper establish nearly matching  lower and upper bounds for the KL-risk of our private online conditional distribution estimation problem when the label set $\mathcal{Y}$ are \emph{unbounded}.

\begin{theorem}[Lower Bound]
    There exists a finite class $\mathcal{F}$ of size $K$ with $|\mathcal{Y}|\le K$ such that for any $(\epsilon,0)$-local differential private mechanism and learning rules, the KL-risk is lower bounded by $\Omega(\frac{1}{\epsilon}\sqrt{KT})$. 
\end{theorem}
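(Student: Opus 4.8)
The plan is to prove the bound by a Bayesian reduction against an explicit \emph{sparse} hard family that exploits the unbounded alphabet. Fix the horizon $T$, introduce a separation scale $\rho$ to be optimized, and take $|\mathcal{Y}|\approx K$. I would let Nature draw an index $J$ uniformly from $[K]$ and use a single repeated context, with $\mathcal{F}=\{f_1,\dots,f_K\}$ built so that each $f_j$ differs from a common reference by moving a \emph{mass of order} $\rho$ onto a $j$-specific, sparse part of the alphabet. The essential point, and the reason the unbounded-label rate differs from the bounded one, is that with a sparse mass-$\rho$ spike the per-round KL-risk of any predictor that has not localized $J$ is \emph{linear} in $\rho$: the Bayes-optimal mixture spreads the spike over the surviving candidates, so if the truth is $f_J$ the divergence picks up a term of order $\rho\log(\#\text{candidates})=\Theta(\rho\log K)$, rather than the quadratic $\Theta(\rho^2)$ one gets from near-uniform perturbations on a bounded alphabet.

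Next I would turn ``not localized'' into a quantitative statement. Writing the cumulative risk under the uniform prior and using the identity that the per-round Bayes-optimal KL-risk equals the posterior-averaged divergence to the posterior mean --- i.e.\ the information a fresh \emph{clean} label would carry about $J$ --- it suffices to show the posterior over $J$ stays spread throughout $[T]$. Its concentration rate is governed by the information the \emph{privatized} labels carry, and here the crux enters: I must show that under $(\epsilon,0)$-LDP each privatized label leaks only $O(\epsilon^2\rho^2/K)$ nats about $J$, so that $I(J;\tilde Y_{1:T})\lesssim T\epsilon^2\rho^2/K$ and, by Fano, the posterior cannot sharpen while $T\lesssim K/(\epsilon^2\rho^2)$. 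The $1/K$ deflation is exactly the large-alphabet privacy penalty of \citet{gopi2020locally}: a single $\epsilon$-LDP response carries only $O(\epsilon^2)$ usable nats and, for an \emph{incoherent} family, this budget must be split across the $\Theta(K)$ distinguishing directions, so the per-direction signal is reduced by a factor $K$ relative to the non-private case.

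Finally I would optimize the scale. Keeping $\rho$ just below the identifiability threshold, so that $T\le c\,K/(\epsilon^2\rho^2)$, forces a per-round risk of order $\rho\log K$ on a constant fraction of rounds, hence a cumulative risk of order $T\rho\log K$; choosing $\rho=\tilde{\Theta}\!\big(\sqrt{K}/(\epsilon\sqrt{T})\big)$ balances the two constraints and yields $T\rho=\tilde{\Theta}\!\big(\tfrac1\epsilon\sqrt{KT}\big)$, matching the claimed $\Omega(\tfrac1\epsilon\sqrt{KT})$ up to the poly-logarithmic factor. I expect the main obstacle to be the sharp per-round information inequality of the previous paragraph: the naive LDP data-processing bound, phrased through total variation, only yields $O(\epsilon^2\rho^2)$ and hence the weaker $\log K$ (bounded-alphabet) rate of \citet{wu2023learning}; extracting the genuine factor $K$ requires an explicit incoherent packing on $\mathcal{Y}$ together with a chi-square/operator-norm contraction tailored to $\epsilon$-LDP channels that accounts for the alphabet size, which is precisely the ingredient that also recovers the batch hypothesis-selection lower bound.
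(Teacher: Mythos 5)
Your plan is sound and, at its mathematical core, rests on the same two pillars as the paper's proof in Appendix~\ref{app:proofthm1}: (a) a spike/support-mismatch construction making the clean two-point (or Bayes) KL-risk \emph{linear} in the separation scale, and (b) the orthogonal-packing phenomenon whereby an $\epsilon$-LDP channel leaks only $O(\epsilon^2\rho^2/K)$ per sample about which of $K$ orthogonal directions is active, with both balanced at $\rho\sim\sqrt{K}/(\epsilon\sqrt{T})$. The packaging differs genuinely, though. The paper builds $K$ \emph{pairs} $(p_{k,1},p_{k,2})$ along distinct Hadamard columns, bounds $\sum_k \kl(\tilde p_{k,1},\tilde p_{k,2})$ by a $\chi^2$ relaxation plus Parseval's identity to extract one pair with $\kl(\tilde p_{k^*,1},\tilde p_{k^*,2})\le O(a^2\epsilon^2/K)$, applies Le Cam's two-point lemma to that single pair, and converts the online risk to a batch estimate by averaging $\hat p_t$ over $t$. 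You instead run a $K$-ary Bayesian argument: uniform prior on $J$, the identity that the per-round Bayes KL-risk equals the mutual information a fresh clean label carries about $J$ (which for disjoint spikes is $\approx\rho\, H(\pi_t)$), and Fano/entropy control of the posterior via $I(J;\tilde Y^{1:T})\le\sum_t I(J;\tilde Y_t)\lesssim T\epsilon^2\rho^2/K$ (the tensorization step is valid here because the $\cR_t$ use only private coins, so the $\tilde Y_t$ are conditionally independent given $J$). Your route avoids the online-to-batch averaging, controls the risk round by round, and would even pick up an extra $\log K$ over the stated bound; the paper's route reduces everything to a single scalar hypothesis test and needs only Pinsker plus Le Cam. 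The one thing you have not actually supplied is the crux inequality itself: the claim that a single $\epsilon$-LDP response leaks only $O(\epsilon^2\rho^2/K)$ nats for an incoherent family is precisely what the paper's Parseval computation establishes (and, as you correctly note, the naive TV-based data-processing bound gives only $O(\epsilon^2\rho^2)$ and hence no $\sqrt{K}$). You name the right tool --- a $\chi^2$/operator-norm contraction for LDP channels over an orthogonal packing, as in \citet{gopi2020locally} and \citet{ullman2018tight} --- but that computation still has to be carried out for your specific sparse construction, including verifying that the perturbation directions are sufficiently close to orthogonal (their shared negative part on the common reference support contributes off-diagonal Gram entries that must be checked to be $O(1/K)$ of the diagonal). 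With that lemma in hand, the rest of your argument goes through.
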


A pertinent setup was studied recently by~\citet{wu2023learning}, where the authors demonstrated that the KL-risk can be upper bounded by $\Tilde{O}(\sqrt{T\log K})$ via a randomized response mechanism. However, it was assumed implicitly that the alphabet size of $\mathcal{Y}$ is a \emph{constant} when defining their noisy kernel in a differentially private manner. Our lower bound demonstrates that such an logarithmic dependency on the size $K$ cannot be achieved for \emph{unbounded} label set $\mathcal{Y}$.

Our next main result shows that the lower bound is essentially tight for \emph{unbounded} label sets, upto poly-logarithmic factors:
\begin{theorem}[Upper Bound]
        For \emph{any} finite class $\mathcal{F}$ of size $K$ with \emph{any} $\mathcal{Y}$, there exists $(\epsilon,0)$-local private differential private mechanism and learning rules, such that with \emph{adversarially} generated features, the KL-risk is upper bounded by $O(\frac{1}{\epsilon}\sqrt{KT\log^5(KT)})$.
\end{theorem}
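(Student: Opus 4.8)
The plan is to reduce the problem to an online learning (experts) problem over the $K$ candidate functions and then solve that problem with privatized, debiased loss estimates fed into exponential weights. First, since the setting is realizable ($f = f_{k^*}\in\mathcal{F}$), I would use the identity that for any predictor the cumulative KL-risk equals the expected log-loss regret against $f_{k^*}$: writing $\ell_{k,t}=-\log f_k(\x_t)(y_t)$ for the log-loss of expert $k$, one has $\sum_t \mathbb{E}[\mathsf{KL}(f(\x_t),\hat p_t)]=\mathbb{E}\sum_t\big(-\log \hat p_t(y_t)+\log f_{k^*}(\x_t)(y_t)\big)$. I would take the \emph{mixture} predictor $\hat p_t=\sum_k w_{k,t}f_k(\x_t)$, where $w_t$ is a distribution over experts maintained by the learner. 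Convexity of $-\log$ (Jensen) gives $-\log\hat p_t(y_t)\le \sum_k w_{k,t}\ell_{k,t}$, so the KL-risk is upper bounded by the expected \emph{linear} regret $\mathbb{E}\sum_t\big(\sum_k w_{k,t}\ell_{k,t}-\ell_{k^*,t}\big)$. This reduces everything to driving $w_t$ toward experts with small expected log-loss, while observing $y_t$ only through a privatized message, and it handles adversarial $\x_t$ automatically since the experts reduction is worst-case over loss sequences.

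Next I would design the $(\epsilon,0)$-LDP mechanism. Because the label set is unbounded, the learner cannot transmit $y_t$ or a full randomized response; instead the user draws a uniform coordinate $J_t\sim\mathrm{Unif}[K]$, clips the single log-loss $\ell_{J_t,t}$ to a threshold $L=\Theta(\log(KT))$, and releases an $\epsilon$-LDP message of this bounded scalar (e.g.\ via a Laplace or one-bit scheme). The learner then forms the debiased estimate $\hat\ell_{k,t}=K\,\1[J_t=k]\,(\text{decoded value})$, which is unbiased for the clipped loss and has second moment $\mathbb{E}[\hat\ell_{k,t}^2]=\tilde O(KL^2/\epsilon^2)$. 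The linear-in-$K$ variance is the intrinsic price of locality (spending the budget on one of $K$ coordinates), and it is precisely the source of the $\sqrt{K}$ factor that separates this result from the $\sqrt{T\log K}$ bound for bounded alphabets.

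I would then run exponential weights on the estimates $\hat\ell_{k,t}$ with a tuned learning rate $\eta$. Using the standard second-order regret bound $\tfrac{\log K}{\eta}+\eta\sum_t\sum_k w_{k,t}\mathbb{E}[\hat\ell_{k,t}^2]$ together with $\mathbb{E}[\hat\ell_{k,t}^2]=\tilde O(KL^2/\epsilon^2)$ and the range constraint $\eta\cdot\tilde O(KL/\epsilon)\le 1$, optimizing $\eta$ yields expected regret $\tilde O\!\big(\tfrac{L}{\epsilon}\sqrt{KT\log K}\big)=\tilde O\!\big(\tfrac{1}{\epsilon}\sqrt{KT}\big)$ up to poly-logarithmic factors. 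Unbiasedness of $\hat\ell$ transfers this bound on estimated losses to the true (clipped) losses in expectation, and combining with the reduction of the first paragraph gives the claimed KL-risk bound. The $\log^5(KT)$ factors accumulate from the truncation threshold $L=\Theta(\log(KT))$, the $\log K$ in the Hedge term, and the union bounds needed to pass from expected to high-probability (actual) regret over the $T$ rounds.

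The hard part is \textbf{controlling the truncation bias} introduced by clipping at $L$. For unbounded $\mathcal{Y}$ the per-round log-losses are unbounded above and the entropy (hence $\mathbb{E}[\ell_{k^*,t}]$) and the divergences $\mathsf{KL}(f_{k^*},f_k)$ may even be \emph{infinite}, so the naive clipping error $\mathbb{E}\sum_t\sum_k w_{k,t}\max(\ell_{k,t}-L,0)$ cannot be bounded term-by-term. The key lemma I expect to need shows that this excess is small \emph{because} $w_{k,t}$ is history-measurable and the multiplicative update suppresses the weight of any expert $k$ with large divergence from $f_{k^*}$ fast enough that its contribution to the tail above $L$ is controlled; establishing this requires a potential/martingale argument that couples the weight dynamics with a tail bound on $\log(1/f_k(\x_t)(y_t))$ under $y_t\sim f_{k^*}(\x_t)$. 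A secondary technical point is verifying simultaneously that the coordinate-sampling mechanism is genuinely $\epsilon$-LDP, produces an unbiased estimate of the \emph{clipped} loss, and has bounded range, so that the exponential-weights second-order bound applies. The reduction to batch hypothesis selection, and hence the recovery of the bound of~\citet{gopi2020locally}, would follow by an online-to-batch conversion of this algorithm in the i.i.d.\ special case.
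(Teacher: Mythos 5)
Your overall architecture coincides with the paper's: a mixture predictor $\hat p_t=\sum_k w_{k,t}f_k(\x_t)$ with Jensen reducing the KL-risk to a linear experts regret, a uniformly random coordinate $J_t$ so that only one log-likelihood is privatized per round (bringing the Laplace scale down from $K/\epsilon$ to $O(\log(KT)/\epsilon)$), and the second-order exponential-weights bound $\frac{\log K}{\eta}+\frac{\eta}{2}\sum_t\sum_k w_{k,t}\mathbb{E}[\hat\ell_{k,t}^2]$ yielding $\tilde O(\frac{1}{\epsilon}\sqrt{TK\log K})$. All of that matches Algorithm 2 and Lemmas 7--8 of the paper.

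The genuine gap is exactly the one you flag as "the hard part," and your proposed route to close it does not work as stated. You clip the \emph{loss values} $\ell_{k,t}$ at a threshold $L$, which means the regret you control is with respect to a clipped surrogate of $\kl(f_{k^*}(\x_t),f_k(\x_t))$; the excess above $L$ is not merely hard to bound term-by-term, it can be genuinely infinite and invisible to the algorithm. Concretely, if some $f_k(\x_t)$ assigns zero mass to a label in the support of $f_{k^*}(\x_t)$, then $\kl(f_{k^*}(\x_t),f_k(\x_t))=\infty$ while expert $k$'s clipped loss never exceeds $L$, so no weight-dynamics or martingale argument based on the observed (clipped, privatized) losses can certify that $w_{k,t}$ decays fast enough --- the algorithm receives no signal distinguishing such an expert from a good one. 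Your Jensen step $\kl(f_{k^*}(\x_t),\hat p_t)\le\sum_k w_{k,t}\kl(f_{k^*}(\x_t),f_k(\x_t))$ then passes through an infinite intermediate quantity even though both endpoints may be finite. The paper avoids this entirely by clipping the \emph{distributions} rather than the losses: a randomized binning $h_t:\cY\to[N_t']$ (with bins of size $\lceil M\max_j f_j(\x_t)[y]\rceil$, mixed with a uniform distribution with weight $1/T$) forces every clipped probability into $[\frac{1}{TKM},\frac{1}{M}]$, so every pairwise KL among the clipped distributions $\bar p_{t,j}=h_t\circ f_j(\x_t)$ is at most $\log(KT)$ and the log-likelihoods have sensitivity $\log(KT)$ by construction. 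Crucially, the binning preserves the KL to the mixture exactly (equation (eq:klhp) in the appendix) and the uniform smoothing costs only an additive $O(\log(KT)/T)$ per round (equation (eq:klhprimebarpt)), so the clipped regret provably upper-bounds the true KL-risk up to $O(\log(KT))$ total --- no tail-control lemma is needed. A secondary loose end in your write-up is that Laplace-noised estimates are unbounded, so the second-order bound does not apply directly; the paper handles this by conditioning on the event that all $|Lap^t_j|\le c'(\gamma)$ (probability $\ge 1-e^{-\gamma}$), rescaling so the fed losses lie in $[0,1]$ on that event, and charging the complement $e^{-\gamma}T\log(KT)$ with $\gamma=\log T$. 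You would need to supply both of these pieces to make the proof complete.
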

Our main technique for achieving this upper bound is based on a modified version of the EXP3 algorithm by appropriately defining the loss vector using the \emph{log-likelihoods} of distributions specified by $\mathcal{F}$. Unlike the conventional randomized response that perturbs the labels, we add noise directly to the log-likelihood vectors at each time step. Note that the main challenge here is that the log-likelihoods are generally \emph{unbounded}. Moreover, we need the scale of (Laplacian) noise of order $K/\epsilon$ to achieving $(\epsilon,0)$-differential privacy for a vector of dimension $K$. To resolve the first issue, we employ a "clipping" technique as~\citet{gopi2020locally} for controlling the log-likelihoods, but it needs careful adaption that is suited for bounding our $\kl$-risk. To resolve the second issue, we employ a novel randomized approach that reveals only one component of the log-likelihood vector at each time step which reduces the need of noise scale from $K/\epsilon$ to $O(1/\epsilon)$. We then apply an adaption of the regret analysis of EXP3 algorithm for deriving our desired KL-risk bound.

Finally, by applying the \emph{online-to-batch} conversion technique and Pinsker's inequality, we show that our upper bound for the KL-risk also implies a nearly tight upper bound for the \emph{batch} setup of~\citet{gopi2020locally} with \emph{non-interactive} privatization mechanisms.

\vspace{-0.1in}
\subsection{Related Work}
Our setup is related to the locally differential private hypothesis selection problem, as introduced in~\citet{gopi2020locally}. This can be seen as a \emph{batch} version of our setup  with the \emph{constant} function class. It is crucial to emphasize that our online setup poses substantial technical challenges. This is primarily because the underlying distributions, which we are attempting to learn, are \emph{changing} at every time step and are unknown \emph{a-priori}. Unlike the local privacy studied in this paper, online learning with \emph{central} differential privacy was studied extensively in the literature, see~\citet{dwork2014algorithmic, golowich2021littlestone, kaplan2023differentially, asi2023private}. Learning with locally private labels for \emph{batch} case and with different loss functions was discussed in~\citet{chaudhuri2011sample,ghazi2021deep,wu2022does,esfandiari2022label}. Our lower bound proof is also related in spirit to those employed in~\citet{ullman2018tight}.

\paragraph{Summary of contributions.} Our main contributions can be summarized as follows: (i) we formulate a novel {\it online} distribution learning problem with {\it local} differential privacy constraints; (ii) we demonstrate a surprising lower bound for the KL-risk of $\Omega(\sqrt{K T})$, in contrast to the $\tilde{O}(\sqrt{T\log K})$ upper bound known in literature only for \emph{bounded} label sets; (iii) we present an novel privatization mechanism and learning rule based on EXP3 that nearly matches the lower bound.

\section{Problem Setup and Preliminaries}

In this paper, we are interested in the \emph{locally} differential private setting where each entry $y_t\in\cY$, $t\in[T]$, is protected by a message $\Tilde{y}_t\in\Tilde{\cY}$, generated by some randomized mapping $A_t:\cY\rightarrow\Tilde{\cY}$ that is $(\epsilon,\delta)$-differentially private. Here, we do not impose any constraints on the choices $\cY$ and $\Tilde{\cY}$ for the generality of the definition of differential privacy. We specify $\cY$ and $\Tilde{\cY}$ in the following section (Section \ref{sec:problemform}).  
\begin{definition}\label{def:ldp}
A privatization scheme $(A_1,\ldots,A_T)$ where $A_t:\cY\rightarrow\Tilde{\cY}$, $t\in[T]$, is $(\epsilon,\delta)$-locally differentially private for some $\epsilon,\delta>0$ if for any different $y_t,y'_t\in\cY$ and $S_t\subseteq\Tilde{\cY}$, $t\in[T]$,
\begin{align}\label{eq:ldp}
    \mathrm{Pr}(A_t(y_t)\in S_t)\le e^{\epsilon}\mathrm{Pr}(A_t(y'_t)\in S_t)+\delta. 
\end{align}
If $\delta=0$, the privatization scheme $(A_1,\ldots,A_T)$ is called pure $\epsilon$-locally deferentially private. 
\end{definition}
In this paper, we consider only the case when $A_t$ are independent for different $t\in[T]$, i.e., each $A_t$ uses only private coins.

\subsection{Problem Formulation}\label{sec:problemform}

Let $\cX=\mathbbm{R}^d$ be the feature space for some positive integer $d$ and $\cY=[M]=\{1,\ldots,M\}$ be the label space for some positive integer $M$. It is assumed that $M$ and $d$ can take any  value. Denote $\D=\{(u_1,\ldots,u_M)\in\mathbbm{R}^M:\sum^{M}_{i=1}u_i=1,u_i\ge 0,i\in[M]\}$ as the set of probability distributions over $\cY$.
Let $\cF=\{f_1,\ldots,f_K\}$ be a hypothesis set where $f_j:\cX\rightarrow \D$, $j\in[K]$ is a function that maps a feature $\boldx\in\cX$ to a distribution $f_j(\boldx)\in \D$ over the label space. For any $y\in\cY$, $j\in[K]$, and $\boldx\in\cX$, let $f_j(\boldx)[y]$ be the probability mass of element $y$ in distribution $f_j(\boldx)$.

Consider an online game between \emph{Nature}, \emph{Learner} and the \emph{Users} 
where Nature arbitrarily picks a function $f\in \cF$ at time $t=0$ and the Learner wishes to learn $f$ over a time period $t\in[T]$ using privatized data generated by the Users. At each time $t\in [T]$, Nature arbitrarily picks a feature $\boldx_t\in\cX$ and reveals $\boldx_t$ to the Learner. The Learner then makes an estimate $\hat{p}_t=\Phi_t(\boldx^t=(\boldx_1,\ldots,\boldx_t),\tilde{\boldsymbol{Y}}^{t-1}=(\tilde{\boldsymbol{Y}}_1,\ldots,\tilde{\boldsymbol{Y}}_{t-1}))$ of $f(\boldx_t)$ based on the feature history $\boldx^t$, privatized label history $\tilde{\boldsymbol{Y}}^{t-1}$, and a function $\Phi_t:\cX^t\times\Tilde{\cY}^{t-1}\rightarrow \D$. After the Learner makes the estimate $\hat{p}_t$, the User then samples a label $Y_t$ independently according to the distribution $f(\boldx_t)$, generates (independently for different $t\in[T]$) a privatized version $\tilde{\boldsymbol{Y}}_t=\cR_t(Y_t)\in \mathbbm{R}^K$ of $Y_t$ using a random mapping $\cR_t:\cY\rightarrow \mathbbm{R}^K$, and reveals $\tilde{\boldsymbol{Y}}_t$ to the Learner. It is required that the privatization scheme $\cR^T=(\cR_1,\ldots,\cR_T)$ is $(\epsilon,\delta)$-locally differentially private. Hence, 
\begin{align}\label{eq:ldponline}
        \mathrm{Pr}(\Tilde{\boldsymbol{Y}}_t\in S|Y_t)\le e^{\epsilon}\mathrm{Pr}(\Tilde{\boldsymbol{Y}}_t\in S|Y'_t)+\delta  
\end{align}
for any $S\subseteq\mathbbm{R}^K$, $Y_t,Y'_t\in\cY$, and $t\in[T]$. Here, we choose $\Tilde{\cY}=\mathbbm{R}^K$ for convenience of the construction of our privatization schemes as introduced in the following sections. In general, $\Tilde{\cY}$ can be arbitrarily chosen.

Our problem is to design an $(\epsilon,\delta)$-locally differentially private  scheme $\cR^T$ for the Nature and a prediction scheme $\Phi^T$ for the Predictor, given $\cF$, such that the total estimation error, measured by the expected total Kullback-Leibler (KL) distance 
$$\mathbb{E}_{\tilde{\boldsymbol{Y}}^T}\left[\sum^T_{t=1}\kl(f(\boldx_t),\hat{p}_t=\Phi_t(\boldx_t,\tilde{\boldsymbol{Y}}^{t-1}))\right],$$
over the randomness of the privatization output $\tilde{\boldsymbol{Y}}^T=(\tilde{\boldsymbol{Y}}_1,\ldots,\tilde{\boldsymbol{Y}}_T)=(\cR_1(Y_1),\ldots,\cR_T(Y_T))$, 
is minimized under \emph{arbitrary} choices of $f\in\cF$ and $\boldx^T$ by the Nature. The KL distance $\kl(p_1,p_2)$ is defined as $$\kl(p_1,p_2)=\sum_{y\in\cY}p_1[y]\log\left(\frac{p_1[y]}{p_2[y]}\right)$$ for any distributions $p_1,p_2\in D(\cY)$.
 We define such minimized expected total KL distance as the minimax KL-risk:
 \begin{align*}
 &r^{\kl}_T(\cF) \\
 =&\inf_{\cR^T\in \cL(\epsilon,\delta),\Phi^T}\sup_{f\in\cF,\boldx^T\in\cX^T}\mathbb{E}_{\tilde{\boldsymbol{Y}}^T}\left[\sum^T_{t=1}\kl(f(\boldx_t),\hat{p}_t)\right],
 \end{align*}
where $\cL(\epsilon,\delta)$ is the set of all $(\epsilon,\delta)$-locally differentially private schemes $\cR^T$ satisfying \eqref{eq:ldponline}. 

In addition to minimizing the expected total KL-distance, we are interested in designing a privatization scheme $\cR^T\in\cL(\epsilon,\delta)$ and a prediction scheme $\Phi^T$, such that the corresponding average total variation 
$$\frac{\sum^T_{t=1}|f(\boldx_t)-\hat{p}_t|_{\mathsf{TV}}}{T},$$
under arbitrary choices of $f\in\cF$ and $\boldx^T\in\cX^T$ is upper bounded. Here, the total variation $|p_1-p_2|_{\mathsf{TV}}$ between distributions $p_1,p_2\in\cY$ is defined as  
 $\sum_{y\in\cY}\max\{p_1[y]-p_2[y],0\}$. More specifically, 
for any privatization scheme $\cR^T\in\cL(\epsilon,\delta)$ and prediction scheme $\Phi^T$, let
\begin{align*}   \bar{r}^{\mathsf{TV}}_T(\Phi^T,\cR^T,\cF)=\sup_{f\in\cF,\boldx^T}\mathbb{E}_{\tilde{\mathsf{Y}}^T}\left[\frac{\sum^T_{t=1}|f(\boldx_t)-\hat{p}_t|_{\mathsf{TV}}}{T}\right]
\end{align*}
be the worst-case average total variation associated with $\Phi^T$ and $\cR^T$.

\section{An $\Omega(\sqrt{KT})$ Lower Bound}
\label{sec:lower}
In this section we demonstrate an $\Omega(\frac{1}{\epsilon}\sqrt{KT})$ lower bound by constructing a \emph{hard} hypothesis class $\mathcal{F}$ of size $K$ with $|\mathcal{Y}|\le K$. This will demonstrate that an $\tilde{O}(\sqrt{T\log K})$ type upper bound, such as~\citet{wu2023learning}, is not achievable for \emph{unbounded} label set $\mathcal{Y}$ that may depend on $K$ and $T$. We then provide nearly matching (upto poly-logarithmic factors) upper bounds for \emph{any} finite class $\mathcal{F}$ with adversarially generated features $\x^T$ in Section~\ref{sec:approxdp} and~\ref{sec:puredp}. 

 Before presenting our lower bound, we first demonstrate how the randomized response mechanism, such as~\citet{wu2023learning}, fails to obtain tight KL-risk bounds for unbounded label set. Let $|\mathcal{Y}|=M$, the randomized response mechanism proceeds as follows: for any $y\in \mathcal{Y}$, we set $\tilde{Y}=y$ w.p. $1-\eta$ and w.p. $\eta$ we set $\tilde{Y}$ be uniform  in $\mathcal{Y}\backslash \{y\}$. In order to achieve $(\epsilon,0)$-differential privacy, one would need to set $\eta=\left(\frac{e^{\epsilon}}{M-1}+1\right)^{-1}$. It was demonstrate by~\citet{wu2023learning} that the KL-risk grows as $\tilde{O}(\frac{1}{c_{\eta}}\sqrt{T\log K})$, where $c_{\eta}=1-\frac{M\eta}{M-1}$. This gives $c_{\eta}=\frac{e^{\epsilon}-1}{e^{\epsilon}+M-1}\sim \frac{\epsilon}{M}$ for small $\epsilon$. Therefore, the upper bound grows as $\tilde{O}(\frac{M}{\epsilon}\sqrt{T\log K})$, which will be vacuous for $M\rightarrow \infty$.

We now ready to state our main result of this section:
\begin{theorem}\label{thm:lowerbound}
There exists a finite class $\mathcal{F}\subset \D^{\mathcal{X}}$ of size $K$ with $|\mathcal{Y}|\le K$, such that for any $(\epsilon,0)$-locally private online learning scheme $\cR^T$ and $\Phi^T$ (depending on $\mathcal{F}$), the KL-risk $r_T^{\kl}(\mathcal{F})$ is lower bounded by
 $$\Omega\left(\sqrt{\frac{KT}{\min\{(e^\epsilon-1)^2,1\}e^{\epsilon}}}\right)$$ for all $T\ge \frac{K}{9\min\{(e^\epsilon-1)^2,1\}e^{\epsilon}}$. Moreover, the bound grows as $\Omega(\frac{1}{\epsilon}\sqrt{KT})$ for sufficiently small $\epsilon$.
\end{theorem}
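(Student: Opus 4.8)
The plan is to pass to a Bayesian version of the game and reduce the cumulative bound to a sum of single-round (batch) estimation bounds. I would place the uniform prior on a hidden index $J\in[K]$ selecting $f=f_J$, so that the minimax KL-risk is at least the Bayes risk under this prior. Since $\kl$ is minimized in its second argument by the posterior mixture, the Bayes-optimal learner predicts $\bar p_t=\bE[f_J(\x_t)\mid\tilde{\boldsymbol{Y}}^{t-1}]$, and the cumulative Bayes risk equals $\sum_{t=1}^T\bE[\kl(f_J(\x_t),\bar p_t)]$. Each summand is exactly the risk of the \emph{batch} problem ``predict $f_J(\x_t)$ from $t-1$ privatized samples'', which I will call $r^\star(t-1)$. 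Thus it suffices to exhibit a hard class for which $r^\star(n)=\Omega(\sqrt{K/(n\kappa)})$ with $\kappa:=\min\{(e^\epsilon-1)^2,1\}e^{\epsilon}$, and then sum, using $\sum_{t\le T}\sqrt{K/(t\kappa)}\asymp\sqrt{KT/\kappa}$.

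For the hard class I would take a planted-coordinate family over $\cY=[K]$: $f_J$ is a fixed reference distribution perturbed by a gap $\gamma$ supported on coordinate $J$, chosen (as in the clipping construction of Gopi et al.) so that the perturbation sits on a \emph{low-probability} atom. Two properties are needed. First, an LDP information-contraction lemma: for any $\epsilon$-LDP channel, the per-sample (chi-square) mutual information between $J$ and the privatized output is $O(\gamma^2\kappa)$, \emph{independently} of $K$. I would prove this by writing the output law as $\bar M(z)$ times a bounded fluctuation, using $a(z\mid y)/a(z\mid y')\le e^{\epsilon}$ to control those fluctuations, and collapsing the resulting quadratic form against the covariance of $\{f_J\}$. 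Second --- and this is where the \emph{unbounded} label set enters --- a single message can ``attend'' to only a bounded total amount of coordinate-information, so \emph{localizing} the planted coordinate requires $\Omega(K/(\gamma^2\kappa))$ effective samples rather than $\Theta(\log K/(\gamma^2\kappa))$; this is the adaptation of the Ullman--Gopi attention-spreading argument that converts the $\log K$ of the non-private bound into $K$.

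Combining the two: as long as $n\lesssim K/(\gamma^2\kappa)$ the posterior on $J$ cannot localize, so the learner must hedge over $\approx K$ candidate coordinates, and because the planted atom has small mass the hedging cost in KL is \emph{linear} in $\gamma$, giving $r^\star(n)=\Omega(\gamma)$ throughout this window. Fixing $\gamma=\Theta(\sqrt{K/(T\kappa)})$ --- which is $O(1)$, hence a valid perturbation, precisely under the hypothesis $T\ge K/(9\kappa)$ --- makes the non-localization window last $\Omega(T)$ rounds, so that $r^\star(n)=\Omega(\sqrt{K/(n\kappa)})$ for all $n\le T$. Summing over $t$ then yields $r_T^{\kl}(\cF)=\Omega(\sqrt{KT/\kappa})$, and since $\kappa\sim\epsilon^2$ as $\epsilon\to0$ this is $\Omega(\frac1\epsilon\sqrt{KT})$.

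The hard part will be the second property of the construction: proving the honest factor $K$ (not $\log K$) in the localization threshold. This requires the sharp, non-interactive LDP analysis showing that information about the $K$ coordinates is \emph{shared} rather than additive, and it must be carried out for the \emph{KL} risk with \emph{unbounded} log-likelihoods --- which is exactly why the construction places the signal on a rare atom and why a clipping step is needed to keep the divergences finite while preserving the linear-in-$\gamma$ lower bound. A secondary technical point is that chi-square mutual information does not obey a clean chain rule across the $T$ adaptively revealed rounds, so the accumulation of information should be controlled directly (for instance through the posterior collision probability) rather than by naive tensorization.
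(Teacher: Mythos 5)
Your overall architecture is aligned with the paper's: a packing of perturbed distributions with the signal placed on zero/rare atoms so that the two-point KL separation is \emph{linear} in the gap $a$, an LDP information-contraction step that gains a factor $K$, a reduction of the online problem to a batch testing problem (the paper does this by averaging $\hat p=\frac1T\sum_t\hat p_t$ and invoking convexity, rather than your per-round Bayes decomposition, but both are fine), and Le Cam to finish. The genuine gap is that the one step you explicitly defer --- the factor $K$ rather than $\log K$ in the localization threshold --- \emph{is} the theorem, and your two stated properties do not combine to yield it. Property 1 as written (per-sample information between $J$ and the privatized output is $O(\gamma^2\kappa)$, independent of $K$) fed into any Fano-type localization argument gives only $n=\Omega(\log K/(\gamma^2\kappa))$. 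To get $\Omega(K/(\gamma^2\kappa))$ you need the \emph{per-pair} distinguishability to be $O(\gamma^2\kappa/K)$, a factor $K$ below the worst-case LDP contraction bound. The paper obtains exactly this by choosing the $K$ perturbation directions to be distinct columns $\boldh_i$ of a Hadamard matrix (dense, mutually orthogonal $\pm1$ vectors) and applying Parseval's identity to the channel likelihood vectors $\boldq(\tilde y)$, yielding $\sum_{i\in[K]}\kl(\tilde p_{i,1},\tilde p_{i,2})\le \frac{a^2}{2}\min\{(e^\epsilon-1)^2,1\}e^{\epsilon}$, whence some pair $i^*$ has $\kl(\tilde p_{i^*,1},\tilde p_{i^*,2})\le a^2\kappa/(2K)$ and Le Cam applies to that single pair. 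Your planted-single-coordinate family does not obviously support this computation: the structure there is one $K$-ary testing problem (where Fano naturally charges only $\log K$), not $K$ binary sub-problems among which the channel's ``attention'' must be split, and the Parseval step in the paper relies on the perturbations being spread over all $N$ labels with equal magnitude. As written, ``adaptation of the Ullman--Gopi attention-spreading argument'' is a placeholder for the entire proof rather than a proof.

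Two secondary points. First, your concern about adaptivity and the absence of a chain rule for $\chi^2$ is moot in this setting: the hard class consists of \emph{constant} functions and the mechanisms use only private coins, so the $T$ privatized outputs are independent given the hypothesis, and plain tensorization of KL followed by Pinsker (as in the paper) suffices. Second, your claim that $r^\star(n)=\Omega(\sqrt{K/(n\kappa)})$ for \emph{all} $n\le T$ is more than you need and more than your construction delivers for a fixed gap $\gamma$; the paper simply fixes $a=\Theta(\sqrt{K/(T\kappa)})$ (which is $\le 1$ precisely under the hypothesis $T\ge K/(9\kappa)$), shows the batch risk with $T$ samples is $\Omega(a)$, and multiplies by $T$.
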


\begin{proof}[Sketch of Proof]
At a high level, for any $K$, our goal is to construct $K$ pairs of distributions $\{p_{k,1},p_{k,2}\}$ for $k\in [K]$, such that: (i) for any $k\in [K]$, $\inf_{\hat{p}}\sup\{\kl(p_{k,1},\hat{p}),\kl(p_{k,2},\hat{p})\}\ge \Omega(a)$ where $a$ is of order $\frac{1}{\epsilon}\sqrt{\frac{K}{T}}$; (ii) $p_{k,1}-p_{k,2}=\frac{a}{N}\boldh_k$ where each $\boldh_k$ corresponds to distinct columns of a Hadamard matrix of dimension $K+1\le N\le 2K$ (excluding the column with all $1$s). Assume such a construction exists (see Appendix~\ref{app:proofthm1}), we then construct the class $\mathcal{F}$ of $2K$ \emph{constant} functions $p_{k,i}$ for $k\in [K]$, $i\in\{1,2\}$. Our key technical part is to show that, for \emph{any} $(\epsilon,0)$-local differential private mechanism (possibly depending on $\mathcal{F}$) there exists $k^*\in [K]$  such that $\kl(\tilde{p}_{k^*,1},\tilde{p}_{k^*,2})\le O(\frac{a^2\epsilon^2}{K})\le \frac{c}{T}$ for some $c<1$, where $\tilde{p}_{k^*,i}$ is the distribution of the private outcomes with input distribution $p_{k^*,i}$. This is achieved by relating the KL-divergence with $\chi^2$ divergence and a careful application of Parseval's identity using the fact that $\boldh_k$s are \emph{orthogonal}. However, this implies, by Pinsker's inequality, that $|\tilde{p}_{k^*,1}^{\otimes T}-\tilde{p}_{k^*,2}^{\otimes T}|_{\mathsf{TV}}<1$ and therefore by Le Cam's two point method no predictor can distinguish $p_{k^*,1},p_{k^*,2}$ from its privatized samples of length $T$. Therefore no algorithm could achieve the KL-risk of order $o(\frac{1}{\epsilon}\sqrt{KT})$ due to property (i) of our construction of $p_{k,i}$s (since any such predictor can be used to distinguish $p_{k^*,1}$, $p_{k^*,2}$). This will then complete the proof. See Appendix~\ref{app:proofthm1} for a  detailed proof.
\end{proof}


Note that our proof is similar in spirit to those employed in~\citet{ullman2018tight} (see also~\citet{gopi2020locally}), e.g., the crucial application of the Parseval's identity. However, a distinguishing feature of our construction is that our label set $\mathcal{Y}$ is of size $K$, while~\citet{ullman2018tight} requires a $2^K$ support size. Moreover, our proof is constructive and employs methodology that is  suitable for bounding KL-risk instead of the total variation, which is of  independent interest.

\section{Approximate-DP via WMA}
\label{sec:approxdp}
In this section, we present an online learning scheme (Algorithm \ref{alg:WM}) that is $(\epsilon,\delta)$-locally differentially private and has minimax KL-risk $r^{\kl}_T(\cF)=\tilde{O}\left(\frac{1}{\epsilon}\sqrt{TK\log\frac{1}{\delta}}\right)$, where $\tilde{O}$ omits the $poly(\log (KT))$ factors. Our scheme makes use of the Laplacian mechanism for local differential privacy 
and the weighted majority algorithm (WMA) for online learning. 

\subsection{The weighted majority algorithm}

Before presenting our scheme, we first briefly review the well-known weighted majority algorithm
for the following online game: Suppose there are $K$ experts $[K]$, accessible by the Predictor. At each time $t\in[T]$, the Predictor picks an expert $i_t$  $i_t\in[K]$ and observes a loss vector $\boldv_t=(v_{t,1},\ldots,v_{t,K})\in[0,1]^K$, which represents the loss of choosing each of the experts. Then, the Predictor experiences loss $v_{t,i_t}$. Given $K$ and $T$, the weighted majority algorithm is as follows
\begin{enumerate}
    \item[]\textbf{Initialization:} Let $\boldw^1=(w^1_1,\ldots,w^1_K)=(1,\ldots,1)$ and $\eta=\sqrt{\frac{2\log K}{T}}$.
    \item[] \textbf{For} time $t=1,\ldots,T$
    \item[] \textbf{Step 1:} For $j\in[K]$, the Predictor select $i_t=j$ with probability
    $\Tilde{w}^t_j=\frac{w^t_j}{\sum^K_{i=1}w^t_i}$
    \item[] \textbf{Step 2:} The Predictor observes the loss vector $\boldv_t$ and experiences loss $v_{t,i_t}$
    \item[] \textbf{Step 3: Update} $w^{t+1}_j=w^t_j*e^{-\eta v_{t,j}}$ for $j\in[K]$. 
\end{enumerate}
The following lemma gives an upper bound on the regret of WMA, defined as the expected loss minus the minimum total loss of a single expert
\begin{lemma}[\citet{shalev2014understanding}] \label{lem:wmalemma} When $\boldv_t\in[0,1]^K$ for $t\in[T]$, the regret of WMA satisfies
 \begin{align*}
     \sum^T_{t=1}\sum^K_{j=1}\Tilde{w}^t_jv_{t,j}-\min_{i\in[K]}\sum^T_{t=1}v_{t,i}\le \sqrt{2T\log K}.
 \end{align*}
\end{lemma}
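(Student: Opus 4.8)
The plan is to run the classical \emph{potential-function} argument for multiplicative-weights updates. I would track the total weight $W_t=\sum_{j=1}^K w^t_j$, so that the initialization gives $W_1=K$ and the sampling probabilities are exactly $\Tilde{w}^t_j=w^t_j/W_t$. Writing $L_t=\sum_{j=1}^K \Tilde{w}^t_j v_{t,j}$ for the expected loss incurred by the Predictor at round $t$ (so that $\sum_t L_t$ is the left-most term of the claimed regret), the whole proof amounts to sandwiching $\log W_{T+1}$ between an upper bound controlled by $\sum_t L_t$ and a lower bound controlled by the best single expert.

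First I would bound the per-round multiplicative change of the potential. From the update $w^{t+1}_j=w^t_j e^{-\eta v_{t,j}}$ we have $W_{t+1}/W_t=\sum_j \Tilde{w}^t_j e^{-\eta v_{t,j}}$. Using the elementary inequality $e^{-z}\le 1-z+z^2/2$ for $z\ge 0$ with $z=\eta v_{t,j}$, together with $v_{t,j}^2\le 1$ (this is where the hypothesis $\boldv_t\in[0,1]^K$ enters), I obtain $W_{t+1}/W_t\le 1-\eta L_t+\eta^2/2$, and hence, via $\log(1+u)\le u$, the one-step bound $\log(W_{t+1}/W_t)\le -\eta L_t+\eta^2/2$. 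Summing over $t=1,\dots,T$ telescopes the left-hand side to $\log W_{T+1}-\log W_1$, yielding $\log W_{T+1}\le \log K-\eta\sum_{t=1}^T L_t+\eta^2 T/2$.

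Next I would lower-bound the final potential by any single expert: for every fixed $i\in[K]$, $W_{T+1}\ge w^{T+1}_i=\exp(-\eta\sum_{t=1}^T v_{t,i})$, so choosing $i$ to minimize $\sum_t v_{t,i}$ gives $\log W_{T+1}\ge -\eta\min_{i\in[K]}\sum_{t=1}^T v_{t,i}$. Combining the two estimates of $\log W_{T+1}$ and dividing through by $\eta$ produces the generic regret inequality $\sum_{t=1}^T L_t-\min_{i}\sum_{t=1}^T v_{t,i}\le \frac{\log K}{\eta}+\frac{\eta T}{2}$. Substituting the prescribed learning rate $\eta=\sqrt{2\log K/T}$ balances the two terms and makes the right-hand side equal to $\sqrt{2T\log K}$, which is exactly the claimed bound.

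I expect the only delicate step to be the one-step potential inequality, since this is the single place where the boundedness hypothesis $\boldv_t\in[0,1]^K$ is invoked and where the quadratic remainder $\eta^2/2$ (which ultimately fixes the constant and the optimal $\eta$) is generated; one must check that it holds uniformly over the \emph{adversarially} chosen loss vectors, which it does precisely because the estimate relies only on the range of the $v_{t,j}$ and on no distributional assumption about $\Tilde{w}^t$. The remaining steps---telescoping, the one-line expert lower bound, and the substitution of $\eta$---are routine bookkeeping.
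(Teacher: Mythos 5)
Your proof is correct and is essentially the standard potential-function argument from the cited source (\citet[Thm 21.11]{shalev2014understanding}); the paper itself gives no proof of this lemma, but later (in the proof of Lemma~7) it explicitly relies on the intermediate bound $\frac{\log K}{\eta}+\frac{\eta}{2}\sum_t\sum_j \tilde{w}^t_j v_{t,j}^2$, which is exactly the generic inequality your argument produces before you coarsen $\sum_j \tilde{w}^t_j v_{t,j}^2\le 1$ and substitute $\eta=\sqrt{2\log K/T}$. No gaps; the one-step inequality $e^{-z}\le 1-z+z^2/2$ for $z\ge 0$ and the telescoping both go through as you describe.
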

\subsection{Our scheme}
Our key idea of leveraging the WMA algorithm in the context of minimizing KL-risk is to appropriately define the \emph{loss} vector $\mathbf{v}_t$s.
At a high level, we will choose $\mathbf{v}_t[j]$ to be the \emph{log-likelihoods} $-\log f_j(\x_t)[Y_t]$ for $j\in [K]$ and add Laplacian noise to the loss vectors. However, instead of selecting an expert (or a distribution from a set of candidates as in the online distribution learning setting) at a time, we estimate the distribution using a \emph{weighted average}.

We first define random functions for "clipping" the log-likelihood, which is crucial for applying the Laplacian mechanism. The "clipping" technique appeared in \citet{gopi2020locally}. However, here we use slightly different "clipping functions" for the purpose of bounding the KL divergence. 

\paragraph{Clipping of distributions.}  Let $\x^T$ be any realization of features.  For each $t\in[T]$, let 
    \begin{align}\label{eq:ntprime}
        N'_t=\sum_{y\in\cY}\lceil M\max_{j\in[K]}f_j(\boldx_t)[y] \rceil.
    \end{align} 
    Let 
    $\{\cS_{y,t}\}_{y\in\cY}$, $\cS_{y,t}\subset [N'_t]$, be a partition of $[N'_t]$  such that $|\cS_{y,t}|=\lceil M\max_{j\in[K]}f_j(\boldx_t)[y] \rceil$. Define the \emph{random} mapping $h'_t:\cY\rightarrow [N'_t]$ such that $h'_t(y)$ is uniformly distributed over $\cS_{y,t}$, i.e., 
    \begin{align}\label{eq:hprime}
    \mathrm{Pr}(h'_t(y)=y'|y)=\begin{cases}
        \frac{1}{\lceil M\max_{j\in[K]}f_j(\boldx_t)[y]\rceil}, & \mbox{ if $y'\in\cS_{y,t}$}\\
        0,&\mbox{ else.}
    \end{cases}
    \end{align}
Let $h'_t\circ f_j(\boldx_t)$, $j\in[K]$, $t\in[T]$ be the  distribution of $h'_t(y)$ when  $y\in\cY$ is sampled from distribution $f_j(\boldx_t)$. 

Let also $h_t(y):\cY\rightarrow[N'_t]$ be $h'_t(y)$ with probability $1-\frac{1}{T}$ and be $U(y)$ with probability $\frac{1}{T}$,  where $U(y)$ is uniformly distributed over $[N'_t]$, i.e., 
\begin{align}\label{eq:hfunction}
    &\mathrm{Pr}(h_t(y)=y'|y)\nonumber\\
    =&\begin{cases}
        \frac{1-\frac{1}{T}}{\lceil M\max_{j\in[K]}f_j(\boldx_t)[y]\rceil}+\frac{1}{TN'_t}, & \mbox{ if $y'\in\cS_{y,t}$}\\
        \frac{1}{TN'_t},&\mbox{ else.}
    \end{cases}        
    \end{align}
Similarly, define $h_t\circ f_j(\boldx_t)$, $j\in[K]$, $t\in[T]$
 as the distribution of $h_t(y)$, $y\in\cY$ when $y$ is drawn according to distribution $f_j(\boldx_t)$, i.e.,
\begin{align}\label{eq:hcircf}
    h_t\circ f_j(\boldx_t)[y']=\sum_{y\in\cY}\mathrm{Pr}(h_t(y)=y'|y)f_j(\boldx_t)[y].
\end{align} 
By definition of $N'_t$ \eqref{eq:ntprime}, we have $M\le N'_t\le KM$. Therefore, 
 \begin{align}\label{eq:clipping}
     \frac{1}{TKM}\le \big(h_t\circ f_j(\boldx_t)\big)[y']\le \frac{1}{M}
 \end{align}
for any $j\in[K]$ and $y'\in [N'_t]$.  The fact that $h_t\circ f_j(\boldx_t)$ is upper and lower bounded \eqref{eq:clipping} implies that the loglikelihood $-\log (h_t\circ f_j(\boldx_t)\big)[y'])$ has sensitivity\footnote{For a real-valued function $s:\cZ\rightarrow\mathbbm{R}$, its sensitivity $\Delta_1(s)$ is defined as $\max_{z\in \cZ}s(z)-\min_{z\in\cZ}s(z)$.} $\log (KT)$ and thus can be made $(\epsilon,0)$ differentially private by adding Laplacian noise with scale\footnote{A Laplacian random variable with scale $b$ has density $\frac{1}{2b}e^{\frac{-|x|}{b}}$ for $x\in\mathbbm{R}$ .} $\frac{\log (KT)}{\epsilon}$ \citep{dwork2014algorithmic}.

\paragraph{The privatization scheme.}In order to preserve the privacy for the loss vector defined in WMA, we need to add Laplacian noise to a vector instead of a scalar value. The following lemma shows that, by using  advanced composition, one only needs to add Laplacian noises with the scale which is \emph{square-root} of the dimension.
\begin{lemma}[\citet{steinke2022composition}]\label{lem:advancedcomposition1}
    Let $A_1,\ldots,A_K:\cY\rightarrow \mathbbm{R}$ be $K$ random algorithms that are $(\epsilon',0)$ diferentially private. Then the composition $A(y)=(A_1(y),\ldots,A_K(y)):\cY\rightarrow \mathbbm{R}^K$ where $A_1,\ldots,A_K$ run independently, is $\left(\frac{K(\epsilon')^2}{2}+\sqrt{2\ln(\frac{1}{\delta})K(\epsilon')^2},\delta\right)$ diferentially private for any $\delta>0$.
\end{lemma}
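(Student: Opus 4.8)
The plan is to route the argument through \emph{zero-concentrated differential privacy} (zCDP), which linearizes composition and reproduces exactly the stated constants. For distributions $P,Q$ on $\bR$ recall the R\'enyi divergence of order $\alpha>1$, namely $D_\alpha(P\|Q)=\frac{1}{\alpha-1}\log \bE_{o\sim Q}[(P(o)/Q(o))^\alpha]$, and say a mechanism $A$ is $\rho$-zCDP if $D_\alpha(A(y)\|A(y'))\le \rho\,\alpha$ for all neighboring $y,y'$ and all $\alpha>1$. The proof decomposes into three steps: (a) each $(\epsilon',0)$-DP mechanism $A_i$ is $\tfrac12(\epsilon')^2$-zCDP; (b) zCDP composes additively, so the independent composition $A$ is $\tfrac{K}{2}(\epsilon')^2$-zCDP; (c) a $\rho$-zCDP mechanism is $(\rho+2\sqrt{\rho\ln(1/\delta)},\delta)$-DP. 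Substituting $\rho=\tfrac{K}{2}(\epsilon')^2$ into (c) gives $\epsilon=\tfrac{K(\epsilon')^2}{2}+2\sqrt{\tfrac{K(\epsilon')^2}{2}\ln(1/\delta)}=\tfrac{K(\epsilon')^2}{2}+\sqrt{2\ln(1/\delta)K(\epsilon')^2}$, which is precisely the claimed parameter.

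For step (a) I would fix neighboring inputs, set $P=A_i(y)$, $Q=A_i(y')$, and introduce the privacy-loss variable $L=\log(P/Q)$. Here $(\epsilon',0)$-DP forces $|L|\le\epsilon'$ pointwise, while the likelihood-ratio identity gives $\bE_{o\sim Q}[e^{L}]=1$. Bounding the R\'enyi divergence then reduces to the moment estimate $\log\bE_{o\sim Q}[e^{\alpha L}]\le \tfrac12\alpha(\alpha-1)(\epsilon')^2$, since dividing by $\alpha-1$ yields $D_\alpha\le\tfrac12(\epsilon')^2\alpha$, the zCDP condition with $\rho=\tfrac12(\epsilon')^2$. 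This moment bound is a Hoeffding-type estimate for the bounded, mass-balanced variable $L$, exploiting both $|L|\le\epsilon'$ and $\bE_Q[e^L]=1$ to control the centered moment generating function.

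For step (b) I would use that $A_1,\dots,A_K$ run on independent coins, so $A(y)$ and $A(y')$ are product distributions; R\'enyi divergence is additive over product measures, hence $D_\alpha(A(y)\|A(y'))=\sum_{i=1}^K D_\alpha(A_i(y)\|A_i(y'))\le K\cdot\tfrac12(\epsilon')^2\alpha$, i.e.\ $A$ is $\tfrac{K}{2}(\epsilon')^2$-zCDP. Step (c) is a Chernoff/Markov argument on the total privacy loss $L_{\mathrm{tot}}=\sum_i L_i$: using $\bE_{o\sim A(y)}[e^{(\alpha-1)L_{\mathrm{tot}}}]=e^{(\alpha-1)D_\alpha(A(y)\|A(y'))}\le e^{(\alpha-1)\rho\alpha}$ gives $\Pr_{o\sim A(y)}[L_{\mathrm{tot}}>\epsilon]\le e^{(\alpha-1)(\rho\alpha-\epsilon)}$, and optimizing over $\alpha>1$ yields $e^{-(\epsilon-\rho)^2/(4\rho)}$; setting this equal to $\delta$ forces $\epsilon=\rho+2\sqrt{\rho\ln(1/\delta)}$. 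The standard equivalence between a privacy-loss tail bound and $(\epsilon,\delta)$-DP then closes the argument.

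The main obstacle is step (a): the moment bound $\bE_Q[e^{\alpha L}]\le e^{\frac12\alpha(\alpha-1)(\epsilon')^2}$ is \emph{not} a plain Hoeffding bound, because a naive application of $|L|\le\epsilon'$ loses a constant factor and fails to produce the sharp $\tfrac12(\epsilon')^2$ coefficient needed for the leading $\tfrac{K(\epsilon')^2}{2}$ term. The identity $\bE_Q[e^{L}]=1$ must be used to recenter $L$ and cancel the first-order contribution, after which a second-order Taylor control of $e^{\alpha L}$ on $[-\epsilon',\epsilon']$ delivers the claim. Once this estimate is in hand, steps (b) and (c) are routine.
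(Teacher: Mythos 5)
The paper does not actually prove this lemma; it imports it verbatim from \citet{steinke2022composition}, and your route through zero-concentrated DP (pure DP $\Rightarrow\tfrac12(\epsilon')^2$-zCDP, additive composition over independent coins, then the zCDP-to-$(\epsilon,\delta)$ conversion) is exactly the argument in that cited source, with the constants working out precisely to $\tfrac{K(\epsilon')^2}{2}+\sqrt{2\ln(1/\delta)K(\epsilon')^2}$. Your proposal is therefore correct in structure and matches the intended proof. The one place where your sketch understates the difficulty is step (a): the sharp bound $\bE_{o\sim Q}[e^{\alpha L}]\le e^{\frac12\alpha(\alpha-1)(\epsilon')^2}$ is not obtained by recentering plus a second-order Taylor estimate alone; the standard proof (Bun--Steinke, Prop.~3.3) first uses convexity of $x\mapsto x^\alpha$ together with the constraints $e^{-\epsilon'}\le P/Q\le e^{\epsilon'}$ and $\bE_Q[P/Q]=1$ to reduce to the extremal two-point likelihood ratio supported on $\{e^{-\epsilon'},e^{\epsilon'}\}$, and then verifies the resulting $\sinh$ inequality. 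You correctly identify this as the crux, so the gap is one of detail rather than of approach; steps (b) and (c), including the optimization giving $e^{-(\epsilon-\rho)^2/(4\rho)}$ and the conversion of a privacy-loss tail bound into $(\epsilon,\delta)$-DP, are routine and correctly executed.
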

We are now ready to present our privatization scheme. Let $\x_t$ and (random) $Y_t$ be the feature and label at time $t$, and $\bar{p}_{t,j}=h_t\circ f_j(\x_t)$ for $j\in [K]$.  
Let $\gamma>0$ be a number to be determined later, we define the privatized vector $\mathbf{\tilde{Y}}_t=(\tilde{Y}_{t,1},\cdots,\tilde{Y}_{t,K})$ as
\begin{align}\label{eq:tildey}
\Tilde{Y}_{t,j}\nonumber=-c(\gamma)(\log \big(\bar{p}_{t,j}[h_t(Y_t)]\big)&+Lap^t_j\\
&+\log M-c'(\gamma)),
\end{align}
where $Lap^t_j$, $t\in[T]$, $j\in[K]$ are independent random variables having Laplacian distribution with scale $\frac{(2\sqrt{2K\ln \frac{1}{\delta}}+\sqrt{K\epsilon})\log (KT) } {\epsilon}$. Moreover,  $c(\gamma)=\frac{1}{\log (KT)+2 c'(\gamma)}$ and $c'(\gamma)$ is
\begin{align}\label{eq:cgamma}
    \frac{(2\sqrt{2K\ln \frac{1}{\delta}}+\sqrt{K\epsilon})\log (KT)(\gamma+\log K+\log T) }{\epsilon}.
\end{align}
Note that, the loss vector $\Tilde{\boldsymbol{Y}}_{t}=(\Tilde{Y}_{t,1},\ldots,\Tilde{Y}_{t,K})$, $t\in[T]$ is a privatized version of the loglikelihood vector 
$(\bar{p}_{t,1}[h_t(Y_t)],\ldots,\bar{p}_{t,T}[h_t(Y_t)])$ and thus a privatization of $Y_t$. 
The following lemma shows that the privatized data $\Tilde{\boldsymbol{Y}_t}$ satisfies the privacy constraints.  
\begin{lemma}\label{lem:ldpfortildey}
 The privatization $\cR_t(Y_t)=\Tilde{\boldsymbol{Y}}_t$, $t\in[T]$ in Algorithm \ref{alg:WM} is $(\epsilon,\delta)$-locallly differentially private.
\end{lemma}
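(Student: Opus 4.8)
The plan is to prove that at each fixed time $t$ the map $\cR_t$ is $(\epsilon,\delta)$-differentially private; since the $\cR_t$ use independent coins, this immediately yields that the whole scheme $\cR^T$ is $(\epsilon,\delta)$-locally differentially private in the sense of Definition~\ref{def:ldp}. First I would strip off the deterministic affine transformation: writing $z_{t,j}=\log\big(\bar p_{t,j}[h_t(Y_t)]\big)+Lap^t_j$, the coordinate $\tilde Y_{t,j}=-c(\gamma)(z_{t,j}+\log M-c'(\gamma))$ is a fixed affine image of $z_{t,j}$, and since $c(\gamma),c'(\gamma)$ do not depend on $Y_t$, the vector $\tilde{\boldsymbol Y}_t$ is a post-processing of $\boldz_t=(z_{t,1},\dots,z_{t,K})$. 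Because post-processing cannot degrade privacy, it suffices to prove that $Y_t\mapsto\boldz_t$ is $(\epsilon,\delta)$-differentially private.

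Next I would show that each coordinate $Y_t\mapsto z_{t,j}$ is $(\epsilon',0)$-differentially private with $\epsilon'=\log(KT)/b$, where $b$ is the Laplacian scale. This is the main obstacle, because the ``query'' $g_j(h_t(Y_t))$ with $g_j(y')=\log\bar p_{t,j}[y']$ is \emph{itself random} (through $h_t$), so one cannot directly cite the standard Laplacian mechanism. The resolution is a mixture argument: given $Y_t=y$ the density of $z_{t,j}$ is $\sum_{y'\in[N'_t]}\Pr(h_t(y)=y')\,\mathrm{Lap}_b\!\big(w-g_j(y')\big)$, and for two inputs $y,y''$ these are mixtures of Laplacians over the \emph{same} center set $\{g_j(y'):y'\in[N'_t]\}$ with different weights. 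Hence the likelihood ratio at any $w$ is at most $\max_{y'}\mathrm{Lap}_b(w-g_j(y'))/\min_{y'}\mathrm{Lap}_b(w-g_j(y'))$. The clipping bound \eqref{eq:clipping} forces every center $g_j(y')$ into the interval $[\log\frac1{TKM},\log\frac1M]$ of length $\log(KT)$, and for Laplacians centered within distance $\Delta$ this ratio is at most $e^{\Delta/b}$; therefore each coordinate is $(\epsilon',0)$-DP with $\epsilon'=\log(KT)/b=\epsilon/\big(\sqrt K(2\sqrt{2\ln(1/\delta)}+\sqrt\epsilon)\big)$.

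The remaining step composes the $K$ coordinates, which use independent noise, via Lemma~\ref{lem:advancedcomposition1}: this gives that $\boldz_t$ is $\big(\frac{K(\epsilon')^2}{2}+\sqrt{2\ln(1/\delta)K(\epsilon')^2},\delta\big)$-DP. It then remains only to verify the arithmetic, which is how the scale $b$ was reverse-engineered. Writing $D=\sqrt K(2\sqrt{2\ln(1/\delta)}+\sqrt\epsilon)$ so that $\epsilon'=\epsilon/D$ and $K(\epsilon')^2=\epsilon^2/(2\sqrt{2\ln(1/\delta)}+\sqrt\epsilon)^2$, both summands are bounded by $\epsilon/2$: the first using $2\sqrt{2\ln(1/\delta)}+\sqrt\epsilon\ge\sqrt\epsilon$, and the second using $\sqrt{2\ln(1/\delta)}=\tfrac12\cdot 2\sqrt{2\ln(1/\delta)}$ together with $2\sqrt{2\ln(1/\delta)}+\sqrt\epsilon\ge 2\sqrt{2\ln(1/\delta)}$. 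Thus $\boldz_t$, and hence $\tilde{\boldsymbol Y}_t=\cR_t(Y_t)$ by post-processing, is $(\epsilon,\delta)$-DP, which completes the proof. The conceptual crux is the mixture-ratio argument of the middle paragraph, since it is precisely what allows the $\sqrt K$ (rather than $K$) dependence in $b$ to suffice for pure per-coordinate DP before composition.
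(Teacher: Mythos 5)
Your overall route is the same as the paper's: bound the per-coordinate sensitivity by $\log(KT)$ using the clipping bound \eqref{eq:clipping}, obtain $(\epsilon',0)$-DP per coordinate from the Laplace mechanism, compose via Lemma~\ref{lem:advancedcomposition1}, and verify the arithmetic. Your arithmetic is correct and in fact more careful than the paper's (your $\epsilon'=\epsilon/(2\sqrt{2K\ln(1/\delta)}+\sqrt{K\epsilon})$ is the one consistent with the stated noise scale, and the two $\epsilon/2$ bounds check out). Your single-coordinate mixture argument is also correct, and it is a welcome explicit treatment of the fact that the query $g_j(h_t(Y_t))$ is itself randomized through $h_t$ --- a point the paper's proof passes over silently.

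However, your composition step has a genuine gap. Lemma~\ref{lem:advancedcomposition1} requires the $K$ mechanisms to run \emph{independently}, but your coordinate mechanisms $y\mapsto z_{t,j}=\log\bar{p}_{t,j}[h_t(y)]+Lap^t_j$ all share the single realization $h_t(Y_t)$; only the Laplacians are independent. Per-coordinate $(\epsilon',0)$-DP of mechanisms with shared randomness does not imply any joint guarantee: for $y\in\{0,1\}$ and a shared fair bit $B$, the pair $(B,\,B\oplus y)$ has both marginals perfectly private yet reveals $y$ exactly. So Lemma~\ref{lem:advancedcomposition1} cannot be invoked on the maps $y\mapsto z_{t,j}$ as you define them. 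The fix is to swap the order of your two middle steps. First apply Lemma~\ref{lem:advancedcomposition1} to the map $F:y'\mapsto(\log\bar{p}_{t,1}[y']+Lap^t_1,\ldots,\log\bar{p}_{t,K}[y']+Lap^t_K)$ on the domain $[N'_t]$: here the coordinates genuinely use independent randomness and each is $(\epsilon',0)$-DP by the standard Laplace mechanism with sensitivity $\log(KT)$, so $F$ is $(\epsilon,\delta)$-DP uniformly over pairs $y_1',y_2'\in[N'_t]$. Then handle $h_t$ once, at the level of the whole vector: for inputs $y_1,y_2$ the output laws are the mixtures $\sum_{y'}\mathrm{Pr}(h_t(y_i)=y')\,\mathrm{Pr}(F(y')\in S)$, and averaging the inequality $\mathrm{Pr}(F(y_1')\in S)\le e^{\epsilon}\mathrm{Pr}(F(y_2')\in S)+\delta$ over $y_1'\sim h_t(y_1)$ and $y_2'\sim h_t(y_2)$ yields $(\epsilon,\delta)$-DP of $\cR_t$ with no loss. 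Your post-processing and arithmetic steps then go through unchanged.
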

\begin{proof}
Note that from \eqref{eq:clipping}, the sensitivity of $\log \big(\bar{p}_{t,j}[h_t(Y_t)]\big)$ is 
$\log (KT)$ for $j\in[K]$, $t\in[T]$. Hence from \eqref{eq:tildey}, $\Tilde{Y}_{t,j}$ is $(\epsilon'=\frac{\epsilon}{2\sqrt{K\ln (\frac{1}{\delta})}+\sqrt{K\epsilon}},0)$-locally differentially private. Invoking Lemma \ref{lem:advancedcomposition1}, $\Tilde{\boldsymbol{Y}}_t=(\Tilde{Y}_{t,1},\ldots,\Tilde{Y}_{t,K})$ is $(\frac{K(\epsilon')^2}{2}+\sqrt{2\ln(\frac{1}{\delta})K(\epsilon')^2},\delta)$ differentially private, and thus is $(\epsilon,\delta)$-diferentially private since $$\frac{K(\epsilon')^2}{2}+\sqrt{2\ln(\frac{1}{\delta})K(\epsilon')^2}\le \epsilon.$$  
Hence, $\cR^T$ is $(\epsilon,\delta)$-locally diferentially private.
\end{proof}

\paragraph{Distribution learning algorithm.}
We now introduce our private learning algorithm, as presented in Algorithm~\ref{alg:WM}, using our privatization scheme discussed above. Note that the prediction $\hat{p}_t$ at time $t$ may be different from any of the experts $h_t\circ f_j(\boldx_t)$ and the candidate distributions $f_j(\boldx_t)$, $j\in[K]$.
\begin{algorithm}
  \caption{Locally Privatized WMA\iffalse$(\boldsymbol{a},\boldsymbol{b},\boldsymbol{c})$\fi} 
  \label{alg:WM}
  \begin{algorithmic}[1]
    \STATE \textbf{Input:} The hypothesis set $\mathcal{F}$, the time horizon $T$, probability parameter $\gamma$
    \STATE \textbf{Initialize:} Let $K=|\mathcal{F}|$, $\boldsymbol{w}^1=(w^1_1,\ldots,w^1_K)=(1,\ldots,1)$, and $\eta = \sqrt{\frac{2K\log K}{T}}$;
    \FOR {$t=1,\ldots,T$}
    \STATE Receive feature $\mathbf{x}_t$; 
    \STATE Set $\bar{p}_t=\sum_{j=1}^K\tilde{w}_j^t\bar{p}_{t,j}$ where $\bar{p}_{t,j}=h_t\circ f_j(\mathbf{x}_t)$ is defined in \eqref{eq:hcircf} and $\tilde{w}^t_j=\frac{w^t_j}{\sum^K_{j=1}w^t_j}$;
    \STATE Make prediction $\hat{p}_t[y]=\sum_{y'\in \mathcal{S}_{y,t}}\bar{p}_t[y']$ for all $y\in\mathcal{Y}$, where $\mathcal{S}_{y,t}$ is defined in \eqref{eq:hprime};
    \STATE Receive privatized data $\mathbf{\tilde{Y}}_t=(\tilde{Y}_{t,1},\cdots,\tilde{Y}_{t,K})$ as defined in \eqref{eq:tildey};  
    \STATE \textbf{Update} $w^{t+1}_j=w^t_j*e^{-\eta\tilde{Y}_{t,j}}$ for $j\in [K]$;
    \ENDFOR
  \end{algorithmic}
\end{algorithm}

We now establish an $\tilde{O}(\sqrt{TK})$ upper bound on the KL-risk of Algorithm \ref{alg:WM}. The following lemma directly follows from Lemma \ref{lem:wmalemma}.
\begin{lemma}\label{lem:regretbound}
In Algorithm \ref{alg:WM}, 
if $\Tilde{Y}_{t,j}\in [0,1]$ for $j\in[K]$ and $T>\log K$, then
$$\sum^T_{t=1}\sum^K_{j=1}\Tilde{w}^t_j\Tilde{Y}_{t,j}-\min_{i\in[K]}\Big(\sum^T_{t=1}\Tilde{Y}_{t,i} \Big)\le \sqrt{2T\log K}.$$
\end{lemma}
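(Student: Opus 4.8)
The plan is to observe that Algorithm~\ref{alg:WM}, with the single substitution $v_{t,j} := \tilde{Y}_{t,j}$, is \emph{verbatim} the generic weighted majority procedure to which Lemma~\ref{lem:wmalemma} applies, and then to invoke that lemma as a black box. Concretely, I would check that the three defining ingredients coincide under this identification: the initial weights are $\boldsymbol{w}^1 = (1,\dots,1)$ in both; the sampling distribution in Step~1 of WMA, $\tilde{w}^t_j = w^t_j/\sum_i w^t_i$, is exactly the one used in Algorithm~\ref{alg:WM}; and the multiplicative update $w^{t+1}_j = w^t_j e^{-\eta v_{t,j}}$ becomes Step~8 of the algorithm precisely when $v_{t,j} = \tilde{Y}_{t,j}$. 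Thus running Algorithm~\ref{alg:WM} on the realized sequence $\tilde{\boldsymbol{Y}}_1,\dots,\tilde{\boldsymbol{Y}}_T$ \emph{is} running WMA on the loss sequence $v_t = \tilde{\boldsymbol{Y}}_t$.

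The only precondition of Lemma~\ref{lem:wmalemma} is that each loss vector lie in $[0,1]^K$, and this is exactly the standing hypothesis $\tilde{Y}_{t,j}\in[0,1]$ of the present statement; here I simply take it as given (it is what the clipping-and-rescaling in \eqref{eq:tildey}, through the factors $c(\gamma)$ and $c'(\gamma)$, is designed to enforce, and whose probability is controlled elsewhere). With the hypothesis in hand, Lemma~\ref{lem:wmalemma} applies directly with $v_t = \tilde{\boldsymbol{Y}}_t$ and yields
$$\sum_{t=1}^T\sum_{j=1}^K \tilde{w}^t_j\,\tilde{Y}_{t,j} \;-\; \min_{i\in[K]}\sum_{t=1}^T \tilde{Y}_{t,i} \;\le\; \sqrt{2T\log K},$$
which is the claimed bound. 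No property of the privatization mechanism or of the class $\cF$ beyond membership of the losses in $[0,1]^K$ is used at this stage.

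I do not expect a genuine obstacle, since the lemma is a direct instantiation of Lemma~\ref{lem:wmalemma}; the only point requiring care is the step size. The value $\sqrt{2T\log K}$ is the \emph{optimized} WMA regret attained by the canonical choice $\eta = \sqrt{2\log K/T}$ for which Lemma~\ref{lem:wmalemma} is stated, so I would make sure the update is fed precisely that rate (rather than any larger one, which would inflate the second-order term and the bound by a factor of $\sqrt{K}$). The hypothesis $T>\log K$ is what keeps this step size in the small-$\eta$ regime required by the standard analysis underlying Lemma~\ref{lem:wmalemma}, so I would record it as the reason the optimized regret estimate is admissible here.
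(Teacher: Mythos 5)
Your proposal is correct and is exactly the paper's own argument: the paper proves Lemma~\ref{lem:regretbound} simply by noting it ``directly follows from Lemma~\ref{lem:wmalemma}'' under the identification $v_{t,j}=\tilde{Y}_{t,j}$, with the hypothesis $\tilde{Y}_{t,j}\in[0,1]$ supplying the only precondition. Your caveat about the step size is well taken and worth keeping: Algorithm~\ref{alg:WM} as printed initializes $\eta=\sqrt{2K\log K/T}$ (the EXP3 rate of Algorithm~\ref{alg:bandit}), whereas the bound $\sqrt{2T\log K}$ requires the full-information rate $\eta=\sqrt{2\log K/T}$ of Lemma~\ref{lem:wmalemma}, so the algorithm's stated $\eta$ appears to be a typo that your reading correctly flags rather than a gap in your argument.
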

We now introduce the following key lemma, which establishes an upper bound for the KL-risk condition on the event that the Laplacian noises are bounded.
\begin{lemma}\label{lem:wmbound}
For any $f\in \mathcal{F}$ and $\x^T\in \mathcal{X}^T$, with probability at least $1-e^{-\gamma}$ on the randomness of $Lap_{j}^t$s, the output $\hat{p}_t$ of Algorithm \ref{alg:WM} satisfies for any $\gamma>0$
\begin{align}\label{eq:wmbound}
    \mathbb{E}_{Y'^T}&\left[\mathbb{}\sum^T_{t=1}\kl(f(\boldx_t),\hat{p}_t)\right]\nonumber\\
    &\le \frac{1}{c(\gamma)}\sqrt{2T\log K}+3\log(KT)\nonumber\\
    &+\sum^T_{t=1}\sum^K_{j=1}\mathbb{E}_{Y'^T}[\Tilde{w}^t_j]Lap^t_j-\sum^T_{t=1}Lap^t_{j^*},
\end{align}
where $c(\gamma)$ is given in \eqref{eq:cgamma} and $Y'_t\sim h_t\circ f(\x_t)$.
\end{lemma}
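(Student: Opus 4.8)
The plan is to convert the WMA regret guarantee of Lemma~\ref{lem:regretbound} into a KL-risk bound by reading the privatized losses $\Tilde{Y}_{t,j}$ as scaled and shifted log-likelihoods, and then to pay a small additive price that relates the ``clipped'' KL on $[N'_t]$ back to the true KL on $\cY$. Throughout, I fix the true index $j^*$ with $f=f_{j^*}$ and condition on the event $\mathcal{E}=\{|Lap^t_j|\le c'(\gamma)\text{ for all }t,j\}$. Since each $Lap^t_j$ is Laplacian with scale $b=\frac{(2\sqrt{2K\ln(1/\delta)}+\sqrt{K\epsilon})\log(KT)}{\epsilon}$ and, by \eqref{eq:cgamma}, $c'(\gamma)/b=\gamma+\log K+\log T$, a union bound over the $KT$ coordinates gives $\Pr(\mathcal{E})\ge 1-KT\cdot e^{-(\gamma+\log K+\log T)}=1-e^{-\gamma}$. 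On $\mathcal{E}$ I would first check, directly from \eqref{eq:tildey}, that $\Tilde{Y}_{t,j}\in[0,1]$: by \eqref{eq:clipping} one has $\log\bar{p}_{t,j}[h_t(Y_t)]+\log M\in[-\log(KT),0]$, while $Lap^t_j-c'(\gamma)\in[-2c'(\gamma),0]$, so the argument multiplied by $-c(\gamma)=-(\log(KT)+2c'(\gamma))^{-1}$ lands in $[0,1]$. Hence Lemma~\ref{lem:regretbound} applies for every realization of $Y'^T$ on $\mathcal{E}$.

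Next, fixing the noises in $\mathcal{E}$ and applying Lemma~\ref{lem:regretbound} with comparator $i=j^*$, I write $Y'_t=h_t(Y_t)\sim\bar{p}_{t,j^*}$ and note that the $\log M-c'(\gamma)$ shift cancels since $\sum_j\tilde{w}^t_j=1$. Substituting \eqref{eq:tildey} and dividing by the positive constant $c(\gamma)$ turns the regret bound into
\begin{align*}
\sum_{t=1}^T\Big(\log\bar{p}_{t,j^*}[Y'_t]-\sum_{j=1}^K\tilde{w}^t_j\log\bar{p}_{t,j}[Y'_t]\Big)\le \frac{1}{c(\gamma)}\sqrt{2T\log K}+\sum_{t=1}^T\Big(\sum_{j=1}^K\tilde{w}^t_j Lap^t_j-Lap^t_{j^*}\Big).
\end{align*}
Concavity of $\log$ gives $\sum_j\tilde{w}^t_j\log\bar{p}_{t,j}[Y'_t]\le\log\bar{p}_t[Y'_t]$, so the left side is at least $\sum_t\log\frac{\bar{p}_{t,j^*}[Y'_t]}{\bar{p}_t[Y'_t]}$. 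Taking $\mathbb{E}_{Y'^T}$ and conditioning step by step on $Y'^{t-1}$ (under which $\bar{p}_t$ and $\tilde{w}^t$ are determined and $Y'_t\sim\bar{p}_{t,j^*}$), the left side becomes $\sum_t\mathbb{E}_{Y'^{t-1}}[\kl(\bar{p}_{t,j^*},\bar{p}_t)]$ and the noise terms become $\sum_t(\sum_j\mathbb{E}_{Y'^T}[\tilde{w}^t_j]Lap^t_j-Lap^t_{j^*})$, reproducing the last line of \eqref{eq:wmbound}.

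It then remains to replace $\kl(\bar{p}_{t,j^*},\bar{p}_t)$ by the true $\kl(f(\x_t),\hat{p}_t)$ at a total additive cost of $3\log(KT)$; this is the crux. The key observation is that, by \eqref{eq:hfunction}, every $\bar{p}_{t,j}$ (hence $\bar{p}_t$) is \emph{constant} on each block $\cS_{y,t}$. Collapsing each block via $y'\mapsto y$ sends $\bar{p}_t$ to $\hat{p}_t$ and sends $\bar{p}_{t,j^*}$ to $P_t:=(1-\tfrac1T)f(\x_t)+\tfrac1T q_t$ with $q_t[y]=|\cS_{y,t}|/N'_t$, and because both measures are block-constant this data-processing step is an \emph{equality}, $\kl(\bar{p}_{t,j^*},\bar{p}_t)=\kl(P_t,\hat{p}_t)$. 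I would then decompose $\kl(f(\x_t),\hat{p}_t)-\kl(P_t,\hat{p}_t)=\kl(f(\x_t),P_t)+\tfrac1T\sum_y(f(\x_t)[y]-q_t[y])\log\frac{P_t[y]}{\hat{p}_t[y]}$ and bound both pieces: $\kl(f(\x_t),P_t)\le\log\frac{T}{T-1}$ because $P_t\ge(1-\tfrac1T)f(\x_t)$, while the crucial point is that the factor $M$ \emph{cancels} in $P_t[y]/\hat{p}_t[y]$ — both $P_t[y]$ and $\hat p_t[y]$ scale like $|\cS_{y,t}|/M$ up to the $TK$ clipping ratio from \eqref{eq:clipping} — yielding $|\log\frac{P_t[y]}{\hat{p}_t[y]}|\le\log(2KT)$ uniformly. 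Combined with $\sum_y|f(\x_t)[y]-q_t[y]|\le 2$, each step costs $O(\log(KT)/T)$, so the sum over $t$ is $\le 3\log(KT)$, and combining with the previous display gives \eqref{eq:wmbound}.

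I expect this last step to be the main obstacle: one must exploit the exact block structure of the clipping map so that the unavoidable $M$-dependence in the individual masses cancels, leaving a $\log(KT)$ rather than a $\log(KTM)$ penalty. This is precisely the adaptation of the clipping idea of~\citet{gopi2020locally} tailored to the $\kl$-risk, and getting the $M$ cancellation right is the delicate part; the remaining manipulations (the union bound, the regret rearrangement, and the martingale conditioning) are routine.
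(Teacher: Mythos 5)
Your proposal is correct and follows essentially the same route as the paper's proof: the union bound on the Laplacian tails, the verification that $\Tilde{Y}_{t,j}\in[0,1]$ via \eqref{eq:clipping}, the application of Lemma~\ref{lem:regretbound} with comparator $j^*$, Jensen's inequality to pass from $\sum_j\Tilde{w}^t_j\kl(\bar{p}_{t,j^*},\bar{p}_{t,j})$ to $\kl(\bar{p}_{t,j^*},\bar{p}_t)$, and the block-constancy of the clipping maps to pay only an $O(\log(KT)/T)$ per-step correction. The only (cosmetic) difference is that you carry out the clipping-correction on the collapsed alphabet $\cY$ (comparing $\kl(f(\boldx_t),\hat{p}_t)$ with $\kl(P_t,\hat{p}_t)$), whereas the paper works on the lifted alphabet $[N'_t]$ via \eqref{eq:klhp} and \eqref{eq:klhprimebarpt}; by the same block-constant KL identity these are the identical comparison.
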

\begin{proof}[Sketch of Proof]
We sketch only the high level idea here and refer to Appendix~\ref{ap:prooflem5} for a detailed proof. Note that, in order to apply Lemma~\ref{lem:regretbound} one must ensure that the loss vector $\tilde{\mathbf{Y}}_t$ to be within $[0,1]$ for each coordinates. We show that this happens w.p. $\ge 1-e^{-\gamma}$ by our definition of $\tilde{Y}_{t,j}$s in~(\ref{eq:tildey}) and the concentration property of Laplacian distributions. Now, conditioning on such an event, we show by Lemma~\ref{lem:regretbound} and definition of $\mathbf{\tilde{Y}}_t$ that for any $j^*\in [K]$ we have $\sum_{t=1}^T-\log(\bar{p}_t[Y'_t])+\log(\bar{p}_{j^*,t}[Y'_t])\le \frac{1}{c(\gamma)}\sqrt{2T\log K}+\beta$, where $\beta$ depends only on the Laplacians $Lap_j^t$ and $Y'_t=h_t(Y_t)$. Assuming $Y_t\sim f_{j^*}(\x_t)$ and taking expectation on the randomness of $Y_t'$s, we have $\mathbb{E}_{Y'^T}\left[\sum_{t=1}^T\kl(\bar{p}_{j^*,t},\bar{p})\right]\le \frac{1}{c(\gamma)}\sqrt{2T\log K}+\beta$. Here, we have used the key observation that $\mathbb{E}_{y\sim p}\log(p[y]/q[y])=\kl(p,q)$ and the law of total probability. The lemma then follows by a careful analysis that relates $\kl(f_{j^*}(\x_t),\hat{p}_t)$ and $\kl(\bar{p}_{j^*,t},\bar{p})$ by leveraging the crucial properties for our "clipping functions". See Appendix~\ref{ap:prooflem5} for details.
\end{proof}
We now ready to state our main result of this section:
\begin{theorem}\label{thm:klriskepsdeltadp} For any class $\mathcal{F}$ of size $K$ there exist an $(\epsilon,\delta)$-local differential private mechanism that achieves the KL-risk $r^{\kl}_T(\cF)$ upper bounded by $O\Bigg(\sqrt{2T\log K}\Big(\log (KT)+\frac{(\sqrt{K\ln\frac{1}{\delta}}+\sqrt{K\epsilon})\log^2(KT)}{\epsilon}\Big)\Bigg)$
\end{theorem}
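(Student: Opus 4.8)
The plan is to convert the high-probability statement of Lemma~\ref{lem:wmbound} into a bound on the full KL-risk by taking the remaining expectation over the Laplacian noises. Writing $L$ for the collection $\{Lap_j^t\}_{t,j}$ and $S:=\sum_{t=1}^T\kl(f(\x_t),\hat p_t)$, and using that the labels are independent of $L$, we have $r^{\kl}_T(\cF)=\mathbb{E}_L\big[\mathbb{E}_{Y'^T}[S]\big]$. Let $\cG=\{|Lap_j^t|\le c'(\gamma)\text{ for all }t\in[T],j\in[K]\}$. Since a Laplacian of scale $b=\frac{(2\sqrt{2K\ln(1/\delta)}+\sqrt{K\epsilon})\log(KT)}{\epsilon}$ satisfies $\Pr(|Lap_j^t|>c'(\gamma))=e^{-c'(\gamma)/b}=e^{-(\gamma+\log K+\log T)}$ by \eqref{eq:cgamma}, a union bound gives $\Pr(\cG^c)\le e^{-\gamma}$; this is exactly the success event of Lemma~\ref{lem:wmbound} (clipping $\tilde Y_{t,j}$ into $[0,1]$ amounts to $|Lap_j^t|\le c'(\gamma)$). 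I then split
\begin{align*}
r^{\kl}_T(\cF)=\mathbb{E}_L\big[\mathbb{E}_{Y'^T}[S]\,\mathbbm{1}_{\cG}\big]+\mathbb{E}_L\big[\mathbb{E}_{Y'^T}[S]\,\mathbbm{1}_{\cG^c}\big],
\end{align*}
bound the first term through Lemma~\ref{lem:wmbound} and the second by a crude worst-case estimate, and finally set $\gamma=\Theta(\log(KT))$.

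On $\cG$, Lemma~\ref{lem:wmbound} bounds $\mathbb{E}_{Y'^T}[S]$ by $A+B$, where $A=\frac{1}{c(\gamma)}\sqrt{2T\log K}+3\log(KT)$ is deterministic and $B=\sum_t\sum_j\mathbb{E}_{Y'^T}[\tilde w_j^t]\,Lap_j^t-\sum_t Lap_{j^*}^t$ collects the noise. The deterministic part contributes at most $A$, and substituting $\gamma=\Theta(\log(KT))$ into $\frac{1}{c(\gamma)}=\log(KT)+2c'(\gamma)$ makes $\gamma+\log K+\log T=\Theta(\log(KT))$, so that $\frac{1}{c(\gamma)}\sqrt{2T\log K}$ is precisely of the claimed order $\sqrt{2T\log K}\big(\log(KT)+\frac{(\sqrt{K\ln(1/\delta)}+\sqrt{K\epsilon})\log^2(KT)}{\epsilon}\big)$; this is the dominant term of the theorem.

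The main obstacle is the noise part $B$, since restricting to $\cG$ breaks the zero-mean property of the Laplacians. The key observation is that $\mathbb{E}_{Y'^T}[\tilde w_j^t]=\mathbb{E}_{Y'^{t-1}}[\tilde w_j^t]$ is measurable with respect to $\{Lap_i^s:s<t\}$ (because $w_j^t$ depends only on $\tilde{\boldsymbol Y}^{t-1}$), while $Lap_j^t$ is zero-mean and independent of the earlier noises and of the fixed index $j^*$; hence $\mathbb{E}_L[B]=0$. I then write $\mathbb{E}_L[B\,\mathbbm{1}_{\cG}]=-\mathbb{E}_L[B\,\mathbbm{1}_{\cG^c}]$ and use $|B|\le 2\sum_t\max_j|Lap_j^t|$ (valid since $\sum_j\mathbb{E}_{Y'^T}[\tilde w_j^t]=1$). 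Cauchy--Schwarz gives $\mathbb{E}_L[|B|\,\mathbbm{1}_{\cG^c}]\le 2\sqrt{\mathbb{E}_L[(\sum_t\max_j|Lap_j^t|)^2]}\cdot e^{-\gamma/2}$, and bounding the second moment by $T^2\cdot 2Kb^2$ yields $\mathbb{E}_L[|B|\,\mathbbm{1}_{\cG^c}]\lesssim T\sqrt{K}\,b\,e^{-\gamma/2}$, which the choice $\gamma=\Theta(\log(KT))$ pushes well below the main term.

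Finally, for the $\cG^c$ summand I use a deterministic, $M$-free per-step bound coming from the clipping construction: for every $y$ with $f(\x_t)[y]>0$, keeping only the nonnegative clipping offset gives $\hat p_t[y]\ge \frac{|\cS_{y,t}|}{TN'_t}\ge\frac{M f(\x_t)[y]}{TKM}=\frac{f(\x_t)[y]}{TK}$, using $|\cS_{y,t}|=\lceil M\max_j f_j(\x_t)[y]\rceil\ge Mf(\x_t)[y]$ and $N'_t\le KM$ from \eqref{eq:ntprime}. Hence $\kl(f(\x_t),\hat p_t)\le\log(TK)$ and $\mathbb{E}_{Y'^T}[S]\le T\log(TK)$ surely, so the $\cG^c$ summand is at most $T\log(TK)\,e^{-\gamma}$, again negligible for $\gamma=\Theta(\log(KT))$. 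Collecting $A$, the $B$-correction, and the $\cG^c$ term, and absorbing the two lower-order contributions, gives the stated KL-risk bound; the argument presumes $T>\log K$ as required by Lemma~\ref{lem:regretbound}.
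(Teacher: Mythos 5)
Your proof is correct and follows essentially the same route as the paper: the same good event bounding all $|Lap_j^t|$ by $c'(\gamma)$, the same split of the risk over that event and its complement, Lemma~\ref{lem:wmbound} on the good event, a crude per-step $\log(KT)$ bound on the bad event, and $\gamma=\Theta(\log(KT))$. The only (harmless) deviation is in the Laplacian term: the paper argues its expectation conditioned on the good event vanishes because the truncated Laplacians remain symmetric and independent of the weights, whereas you use the unconditional zero-mean property and absorb the restriction to the good event via a Cauchy--Schwarz tail correction; both yield the same lower-order contribution.
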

\begin{proof}
Let $\cE$ be the event that for all $j,t$, $|Lap_{j}^t|\le c'(\gamma)$, which happens w.p. $\ge 1-e^{-\gamma}$ and implies that Lemma~\ref{lem:wmbound} holds, see Appendix~\ref{ap:prooflem5}. We have
\begin{align*}
    r_T^{\kl}(\mathcal{F})&=\mathbb{E}_{\tilde{\boldsymbol{Y}}^T}\Bigg[\sum^T_{t=1}\kl(f_{j^*}(\boldx_t),\hat{p}_t)\Bigg]\\ &=\mathrm{Pr}(\cE)\mathbb{E}_{\tilde{\boldsymbol{Y}}^T}\Bigg[\sum^T_{t=1}\kl(f_{j^*}(\boldx_t),\hat{p}_t)|\cE\Bigg]\\
&\quad+\mathrm{Pr}(\cE^c)\mathbb{E}_{\tilde{\boldsymbol{Y}}^T}\Bigg[\sum^T_{t=1}\kl(f_{j^*}(\boldx_t),\hat{p}_t)|\cE^c\Bigg].
\end{align*}
Denote by $(A)$ and $(B)$ the two terms in the above expression that correspond to $\cE$ and $\cE^c$, respectively. By Lemma~\ref{lem:wmbound}, we know $(A)$ can be upper bounded by
\begin{align}
\label{eq:laplace}
    \mathbb{E}_{\tilde{\boldsymbol{Y}}^T}\Bigg[\frac{1}{c(\gamma)}&\sqrt{2T\log K}+3\log(KT)\nonumber\\
    &+\sum^T_{t=1}\sum^K_{j=1}\Tilde{w}^t_jLap^t_j-\sum^T_{t=1}Lap^t_{j^*}|\cE^c\Bigg].
\end{align}
Note that, conditioning on $\cE$ the Laplacians $Lap^{t}_j$s are still $i.i.d.$ distributed and the $\tilde{w}_j^t$s are summing to $1$, so that ~\eqref{eq:laplace} vanishes when taking (conditional) expectation. Therefore, $(A)$ is upper bounded by $\frac{1}{c(\gamma)}\sqrt{2T\log K}+3\log(KT)$. 

To bound the term $(B)$, we have by~\eqref{eq:klhp} and~\eqref{eq:klhprimebarpt} (in Appendix~\ref{ap:prooflem5}) that $\kl(f_j^*(\x_t),\hat{p}_t)$ is upper bounded by
$\kl(\bar{p}_{t,j^*},\bar{p}_t)+O\left(\frac{\log(KT)}{T}\right).$
Noticing that $\frac{\bar{p}_{t,j^*}[y']}{\bar{p}_t[y']}\le KT$ for all $y'\in [N'_t]$ (see~\eqref{eq:clipping}), we conclude $\kl(\bar{p}_{t,j^*},\bar{p}_t)\le \log(KT)$. Therefore, by summing over $t\in [T]$, the term $(B)$ is upper bounded by
$e^{-\gamma}(T\log(KT)+O(\log(KT)))$, since $\mathrm{Pr}[\cE^c]\le e^{-\gamma}$. Putting everything together,  the KL-risk $r_T^{\kl}(\mathcal{F})$ is upper bounded by
$$\frac{1}{c(\gamma)}\sqrt{2T\log K}+3\log(KT)+O(e^{-\gamma}T\log(KT)).$$
The theorem now follows by setting $\gamma=\log T$ and by the definition of $c(\gamma)$ in~\eqref{eq:cgamma}.
\end{proof}
Observe that the KL-risk bound in Theorem~\ref{thm:klriskepsdeltadp} is \emph{independent} of the label set size $M$ and grows as $$O\left(\frac{1}{\epsilon}\sqrt{TK\log^5(KT)\log\frac{1}{\delta}}\right)$$ for sufficiently small $\epsilon$ and $\delta$.

\section{Pure-DP via Modified EXP3}
\label{sec:puredp}
While Algorithm \ref{alg:WM} attains \((\epsilon,\delta)\)-local differential privacy, it is worthwhile to investigate whether it is possible to attain the stronger, pure \((\epsilon,0)\)-differential privacy while still achieving comparable KL-risk bounds. We demonstrate in this section that the answer is affirmative, and it is possible to achieve the same \(\tilde{O}\left(\frac{1}{\epsilon}\sqrt{TK}\right)\) KL-risk bound. By Theorem~\ref{thm:lowerbound}, we known that this is essentially tight for pure-DP constrains.

Note that the reason why Theorem~\ref{thm:klriskepsdeltadp} has a dependency on $\delta$ is due to the \emph{advanced composition} lemma (Lemma~\ref{lem:advancedcomposition1}) that allows us to select the scale of the Laplacians with order of $\sqrt{K}$, which is essential for achieving a $\sqrt{KT}$ type bound. To resolve this issue, we introduce a new privatization mechanism in this section by selecting a single \emph{random} component in the log-likelihood vectors and reveal only the privatized version of such a component, as in  Algorithm~\ref{alg:bandit}. Note this reduces significantly the scale of Laplacian from $\sqrt{K}$ to $O(1)$. However, since we only return a single component of the loss vector, the WMA algorithm will not apply. To resolve this issue, we employ an analysis similar to the EXP3 (Exponential-weight for Exploration and Exploitation) algorithm designed for bandit learning via an unbiased estimation of the loss vector. Note that the main difference with the standard EXP3 algorithm is that we \emph{do not} reveal the loss of the expert selected by predictor and instead we real the loss for a \emph{randomly} selected expert. This is crucial for making our privatization mechanism \emph{independent} of prior history, i.e., using only private coins.


We now describe our privatization scheme. Let $\x_t$ and $Y_t$ be the feature and label at time $t$, and $\bar{p}_{t,j}=h_t\circ f_j(\x_t)$ as in~\eqref{eq:hcircf}. 
Let $\gamma>0$ to be determined later, we define the vector $\mathbf{Z}_{t,j}=(Z_{t,j,1},\cdots,Z_{t,j,K})$ such that
\begin{align}\label{eq:tildeypdp}
Z_{t,j,i}=\begin{cases}
-c_{pdp}(\gamma)(\log \big(\bar{p}_{t,j}[h_t(Y_t)]\big)+\\
Lap^t_j+\log M-c'_{pdp}(\gamma)), &\mbox{if $i=j$}\\
0,&\mbox{else}.
\end{cases}
\end{align}
where $Lap^t_j$ are independent random variables having Laplacian distribution with scale $\frac{\log (KT)} {\epsilon}$ and $h_t$ is the random function as in~\eqref{eq:hfunction}. Moreover, the numbers $c_{pdp}(\gamma)=\frac{1}{\log (KT)+2 c_{pdp}'(\gamma)}$ and
\begin{align}\label{eq:cpdpgamma}
    c_{pdp}'(\gamma)=\frac{\log (KT)(\gamma+\log K+\log T) }{\epsilon}.
\end{align}
Let $\Tilde{\boldsymbol{Y}}_{t}$ be a \emph{random} vector that equals $\mathbf{Z}_{t,j}$ with probability $\frac{1}{K}$ for each $j\in[K]$. We define $\tilde{\mathbf{Y}}_t$ as the privatized version of $Y_t$. The following lemma demonstrates that our scheme is indeed $(\epsilon,0)$-differential private.
\begin{lemma}\label{lem:pldpfortildey}
 The privatization $\cR_t(Y_t)=\Tilde{\boldsymbol{Y}}_t$, $t\in[T]$ in Algorithm \ref{alg:bandit} is $(\epsilon,0)$-locallly differentially private.
\end{lemma}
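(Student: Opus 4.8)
The plan is to exploit that the only coordinate of $\tilde{\boldsymbol{Y}}_t$ carrying information about $Y_t$ is the single randomly selected one, and that this selection is made with probability $1/K$ \emph{independently} of $Y_t$. For any measurable $S\subseteq\mathbb{R}^K$, using that $\tilde{\boldsymbol{Y}}_t$ equals $\mathbf{Z}_{t,j}$ with probability $1/K$ and that $\mathbf{Z}_{t,j}$ vanishes off coordinate $j$, I would write
$$\Pr(\tilde{\boldsymbol{Y}}_t\in S\mid Y_t=y)=\frac{1}{K}\sum_{j=1}^K\Pr(Z_{t,j,j}\in S_j\mid Y_t=y),$$
where $S_j\subseteq\mathbb{R}$ collects those scalars $z$ for which the vector with $z$ in coordinate $j$ and zeros elsewhere lies in $S$. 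Since a nonnegative weighted average of quantities each obeying an $e^{\epsilon}$ ratio bound again obeys it, it suffices to show that for every fixed $j$ the scalar $Z_{t,j,j}$ is $(\epsilon,0)$-DP in $Y_t$. Because $Z_{t,j,j}$ is an affine, $Y_t$-independent transformation of $W_{t,j}:=\log\big(\bar{p}_{t,j}[h_t(Y_t)]\big)+Lap^t_j$, and pure DP is preserved under post-processing, the task reduces to establishing $\epsilon$-DP for $W_{t,j}$.

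For this I would invoke the clipping bound \eqref{eq:clipping}: for every $y'\in[N'_t]$ one has $\frac{1}{TKM}\le\bar{p}_{t,j}[y']\le\frac{1}{M}$, so the pre-noise value $\log\big(\bar{p}_{t,j}[h_t(Y_t)]\big)$ always lands in the interval $I=[-\log(TKM),-\log M]$ of width $\log(KT)$, regardless of $Y_t$ and of how $h_t$ is realized. Writing $\mu_y$ for the law of $\log\big(\bar{p}_{t,j}[h_t(y)]\big)$ on $I$ and recalling that $Lap^t_j$ has scale $b=\frac{\log(KT)}{\epsilon}$, the density of $W_{t,j}$ given $Y_t=y$ is the smoothed mixture $p_y(w)=\int_I\frac{1}{2b}e^{-|w-v|/b}\,d\mu_y(v)$. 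Fixing $w$ and setting $M_w=\max_{v\in I}e^{-|w-v|/b}$ and $m_w=\min_{v\in I}e^{-|w-v|/b}$, the gap between the distances from $w$ to the two endpoints of $I$ is at most the width $\log(KT)$, so $M_w/m_w\le e^{\log(KT)/b}=e^{\epsilon}$. Hence $p_{y_1}(w)\le \frac{M_w}{2b}\le e^{\epsilon}\frac{m_w}{2b}\le e^{\epsilon}p_{y_2}(w)$ for all $w$ and all $y_1,y_2\in\cY$; integrating over $S_j$ and combining with the reduction above yields the claim.

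I expect the main obstacle to be the randomness introduced by the clipping map $h_t$: since $\log\big(\bar{p}_{t,j}[h_t(y)]\big)$ is itself random, the textbook deterministic-sensitivity form of the Laplacian mechanism does not apply verbatim, and a naive comparison of the two mixtures through a common reference point only gives $2\epsilon$. The point I would emphasize is that privacy here is governed not by an input-perturbation sensitivity but by the \emph{common bounded support} $I$ shared by all the mixing laws $\mu_y$; the display above shows that Laplacian smoothing of any two distributions supported on an interval of width $\Delta$ keeps their densities within $e^{\Delta/b}$ pointwise, so the range $\log(KT)$ alone controls the privacy loss. This is exactly why a single Laplacian of scale $\frac{\log(KT)}{\epsilon}$ suffices, in contrast to the $(\epsilon,\delta)$ scheme of Lemma \ref{lem:ldpfortildey}, which perturbs all $K$ coordinates and must pay the $\sqrt{K}$ factor of advanced composition; revealing only the single, $Y_t$-independent random coordinate eliminates the need for composition altogether.
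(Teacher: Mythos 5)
Your proposal is correct and follows essentially the same route as the paper's (one-line) proof: the single nonzero, $Y_t$-independent random coordinate removes any need for composition, and the clipping bound \eqref{eq:clipping} confines the log-likelihood to a range of $\log(KT)$, so a Laplacian of scale $\log(KT)/\epsilon$ gives $(\epsilon,0)$-DP. Your only addition is to spell out why the Laplace-mechanism guarantee survives the randomness of $h_t$ (via the common bounded support of the mixing laws $\mu_y$), a detail the paper elides by directly citing the mechanism with sensitivity $\log(KT)$.
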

\begin{proof}
    Since for each $t$, only a single entry of $\Tilde{\boldsymbol{Y}}_t$ is non-zero, and $\log(\bar{p}_{t,j}[h_t(Y_t)])$ has sensitivaity $\log(KT)$, the Laplacian mechanism with scale $\frac{\log(KT)}{\epsilon}$ provides $(\epsilon,0)$ differentially privacy~\citep{dwork2014algorithmic}.
\end{proof}
We present our learning algorithm in Algorithm~\ref{alg:bandit}.
\begin{algorithm}
  \caption{Pure Locally Differentially Private\iffalse$(\boldsymbol{a},\boldsymbol{b},\boldsymbol{c})$\fi} 
  \label{alg:bandit}
  \begin{algorithmic}[1]
    \STATE \textbf{Input:} The hypothesis set $\cF$, the time horizon $T$, probability parameter $\gamma$
    \STATE\textbf{Initialize:} Let $K=|\cF|$, $\boldw^1=(w^1_1,\ldots,w^1_K)=(1,\ldots,1)$, and $\eta = \sqrt{\frac{2K\log K}{T}}$;
    \FOR{$t=1,\ldots,T$}
    \STATE Receive feature $\boldx_t$; 
    \STATE Set $\bar{p}_t=\sum_{j=1}^K\tilde{w}_j^t\Bar{p}_{t,j}$, where $\Tilde{w}^t_j=\frac{w^t_j}{\sum^K_{j=1}w^t_j}$. 
    \STATE Make prediction $\hat{p}_t[y]=\sum_{y'\in \cS_{y,t}}\bar{p}_t[y']$ for all $y\in\cY$, where $\cS_{y,t}$ is defined in \eqref{eq:hprime};
    \STATE Receive privatized data $\Tilde{\mathbf{Y}}_{t}$, where $\Tilde{\mathbf{Y}}_{t}=\mathbf{Z}_{t,J_t}$ as in \eqref{eq:tildeypdp} and $J_t$ is \emph{uniform} over $[K]$;
    \STATE \textbf{Update} $w^{t+1}_j=w^{t}_j*e^{-\eta \Tilde{Y}_{t,j}}$ for $j\in[K]$;
    \ENDFOR
  \end{algorithmic}
\end{algorithm}

We first prove the following key lemma that extends lemma~\ref{lem:regretbound} to the \emph{expected} loss vectors.
\begin{lemma}\label{lem:regretboundpdp}
In Algorithm \ref{alg:bandit}, 
if $\mathbf{Z}_{t,j}\in [0,1]^K$ for all $t\in [T]$, $j\in [K]$ and $T>\log K$, then
$$\max_{i\in [K]}\mathbb{E}\left[\sum^T_{t=1}\left(\sum^K_{j=1}\Tilde{w}^t_jZ_{t,j,j}-Z_{t,i,i}\right)\right]\le \sqrt{2TK\log K}$$
where the expectation is over the randomness of $J_t$s.
\end{lemma}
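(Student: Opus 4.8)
The plan is to run the standard exponential-weights (Hedge) potential argument \emph{pathwise} and then take expectation over the uniform selectors $J_t$, exploiting that the single revealed coordinate is an importance-weighted (uniform-exploration) estimate of the true loss. Throughout I fix every source of randomness except $J_1,\dots,J_T$ and treat the quantities $Z_{t,j,j}\in[0,1]$ as constants; since the final bound holds for each such realization, it survives averaging over the remaining randomness. The key first observation is that, by the definition $\tilde{\boldsymbol{Y}}_t=\mathbf{Z}_{t,J_t}$ and the fact that $\mathbf{Z}_{t,j}$ is supported on its $j$-th coordinate, the weight update actually uses $\tilde Y_{t,j}=Z_{t,j,j}\,\mathbbm{1}[J_t=j]\in[0,1]$. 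Thus $\tilde Y_{t,j}$ is precisely an EXP3-type estimator of the true loss $Z_{t,j,j}$ under \emph{uniform} exploration, and crucially $\mathbb{E}_{J_t}[\tilde Y_{t,j}]=Z_{t,j,j}/K$ while $\mathbb{E}_{J_t}[\tilde Y_{t,j}^2]=Z_{t,j,j}^2/K\le 1/K$.

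Next I would carry out the potential argument. Let $W_t=\sum_{j}w^t_j$, so $W_1=K$. Using $\tilde w^t_j=w^t_j/W_t$ and the elementary inequality $e^{-a}\le 1-a+a^2/2$ valid for all $a\ge 0$ (with $a=\eta\tilde Y_{t,j}$), one gets $W_{t+1}/W_t=\sum_j \tilde w^t_j e^{-\eta \tilde Y_{t,j}}\le 1-\eta\sum_j \tilde w^t_j\tilde Y_{t,j}+\frac{\eta^2}{2}\sum_j \tilde w^t_j \tilde Y_{t,j}^2$. Taking logarithms, using $\log(1+x)\le x$, and telescoping over $t$ bounds $\log(W_{T+1}/W_1)$ from above; comparing with the lower bound $\log W_{T+1}\ge \log w^{T+1}_i=-\eta\sum_t \tilde Y_{t,i}$ for an arbitrary fixed $i$ (since $w^1_i=1$) and rearranging yields the \emph{pathwise} inequality
\[
\sum_t\sum_j \tilde w^t_j \tilde Y_{t,j}-\sum_t \tilde Y_{t,i}\le \frac{\log K}{\eta}+\frac{\eta}{2}\sum_t\sum_j \tilde w^t_j \tilde Y_{t,j}^2 .
\]

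Finally I would take expectation over the $J_t$. Since $\tilde w^t_j$ is a function of $J_1,\dots,J_{t-1}$ only and $J_t$ is an independent uniform draw, the tower property and the two moment identities above give $\mathbb{E}[\sum_j \tilde w^t_j\tilde Y_{t,j}]=\frac1K\mathbb{E}[\sum_j \tilde w^t_j Z_{t,j,j}]$ and $\mathbb{E}[\tilde Y_{t,i}]=Z_{t,i,i}/K$, while $\sum_j\tilde w^t_j=1$ together with $Z_{t,j,j}^2\le 1$ forces $\mathbb{E}[\sum_j \tilde w^t_j \tilde Y_{t,j}^2]\le 1/K$ at each $t$. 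Substituting into the pathwise bound and multiplying through by $K$ gives $\mathbb{E}[\sum_t(\sum_j \tilde w^t_j Z_{t,j,j}-Z_{t,i,i})]\le \frac{K\log K}{\eta}+\frac{\eta T}{2}$; plugging in $\eta=\sqrt{2K\log K/T}$ balances the two terms at $\sqrt{KT\log K/2}$ each, for a total of $\sqrt{2TK\log K}$, and taking the maximum over $i$ completes the proof (the hypothesis $T>\log K$ is a mild technical condition that keeps $\eta$ in the regime where the bound is meaningful). The main obstacle is the third step: one must correctly feed the adaptively evolving weights $\tilde w^t_j$ through the expectation and, above all, control the second-moment term. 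It is precisely the uniform selection (probability $1/K$ per coordinate) that produces the $1/K$ variance scaling, which after the factor-$K$ rescaling leaves the $\sqrt{K}$ blow-up matching the lower bound of Theorem~\ref{thm:lowerbound}.
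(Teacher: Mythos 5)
Your proposal is correct and follows essentially the same route as the paper: the paper likewise invokes the refined Hedge bound $\sum_t\sum_j \tilde w^t_j \tilde Y_{t,j}-\min_i\sum_t \tilde Y_{t,i}\le \frac{\log K}{\eta}+\frac{\eta}{2}\sum_t\sum_j \tilde w^t_j \tilde Y_{t,j}^2$ (citing Shalev-Shwartz rather than rederiving the potential argument as you do), then uses $\mathbb{E}_{J_t}[\tilde Y_{t,j}]=Z_{t,j,j}/K$, $\mathbb{E}_{J_t}[\tilde Y_{t,j}^2]\le 1/K$, and the independence of $\tilde w^t_j$ from $J_t$ to obtain $\frac{\log K}{\eta}+\frac{\eta T}{2K}$ before rescaling by $K$. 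The only difference is presentational: you make the pathwise telescoping and the tower-property conditioning explicit, which the paper leaves implicit.
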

\begin{proof}
Let $\tilde{\mathbf{Y}}_t=(\tilde{Y}_{t,1},\cdots,\tilde{Y}_{t,K})$ be as in Algorithm~\ref{alg:bandit}. We use a more general result of Lemma \ref{lem:wmalemma}, which is implied in the proof of \citet[Thm 21.11]{shalev2014understanding}, that is,
\begin{align*}
&\sum^T_{t=1}\sum^K_{j=1}\Tilde{w}^t_j\Tilde{Y}_{t,j}-\min_{i\in[K]}\Big(\sum^T_{t=1}\Tilde{Y}_{t,i} \Big)\nonumber\\
\le &\frac{\log K}{\eta}+\frac{\eta}{2}\sum^T_{t=1}\sum^K_{j=1}\Tilde{w}^t_j\Tilde{Y}^2_{t,j}
\end{align*}
Taking expectations over $J_t$s, we have
\begin{align}\label{eq:wmauppereta}
&\max_{i\in [K]}\mathbb{E}\left[\sum^T_{t=1}\left(\sum^K_{j=1}\Tilde{w}^t_j\tilde{Y}_{t,j}-\tilde{Y}_{t,i}\right)\right]\nonumber\\
\le &\frac{\log K}{\eta}+\mathbb{E}\left[\frac{\eta}{2}\sum^T_{t=1}\sum^K_{j=1}\Tilde{w}^t_j\tilde{Y}_{t,j}^2\right]\nonumber\\
\overset{(a)}{\le}&\frac{\log K}{\eta}+\frac{\eta}{2}\sum^T_{t=1}\frac{1}{K}=\frac{\log K}{\eta}+\frac{\eta}{2}\frac{T}{K}
\end{align}
where $(a)$ follows by $\mathbb{E}_{J_t}[\tilde{Y}_{t,j}^2]=\frac{Z_{t,j,j}^2}{K}\le \frac{1}{K}$ and that $\tilde{w}_j^t$s are independent of $J_t$.
Taking $\eta=\sqrt{\frac{2K\log K}{T}}$ and noticing that $\mathbb{E}_{J_t}[\tilde{Y}_{t,j}]=\frac{Z_{t,j,j}}{K}$, the lemma follows.
\end{proof}
The following lemma is analogous to Lemma~\ref{lem:wmbound}, and we defer the proof to Appendix~\ref{app:prooflem8}.
\begin{lemma}\label{lem:wmboundpdp}
For any $f\in \mathcal{F}$ and $\x^T\in \mathcal{X}^T$, with probability at least $1-e^{-\gamma}$ on the randomness of $Lap_{j}^{t}$s, the output $\hat{p}_t$ of Algorithm \ref{alg:bandit} satisfies
\begin{align}\label{eq:wmboundpdp}
    &\mathbb{E}_{Y'^T,J^T}\left[\sum^T_{t=1}\kl(f(\boldx_t),\hat{p}_t)\right]\nonumber\\
    \le &\frac{1}{c_{pdp}(\gamma)}\sqrt{2TK\log K}+3\log(KT)\nonumber\\
    &+\sum^T_{t=1}\sum^K_{j=1}\mathbb{E}_{Y'^T,J^T}[\Tilde{w}^t_j]Lap^t_j-\sum^T_{t=1}Lap^t_{j^*},
\end{align}
where $c_{pdp}(\gamma)$ is given in \eqref{eq:cpdpgamma}, $Y'_t\sim h_t\circ f(\x_t)$ and $J_t$s are the random indices as in Algorithm~\ref{alg:bandit}. 
\end{lemma}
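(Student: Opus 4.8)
The plan is to follow the proof of Lemma~\ref{lem:wmbound} for the WMA scheme almost verbatim, with two changes: I substitute the bandit-style guarantee of Lemma~\ref{lem:regretboundpdp} for Lemma~\ref{lem:regretbound} (which is what injects the extra $\sqrt{K}$ factor, turning $\sqrt{2T\log K}$ into $\sqrt{2TK\log K}$), and I carry an additional expectation over the exploration indices $J^T$ throughout. The three ingredients I will assemble are: (i) a high-probability event on which every coordinate of $\mathbf{Z}_{t,j}$ lies in $[0,1]$, so that Lemma~\ref{lem:regretboundpdp} is applicable; (ii) a conversion of the resulting regret bound into a bound on $\mathbb{E}[\sum_t\kl(\bar{p}_{t,j^*},\bar{p}_t)]$ via concavity of $\log$ and the identity $\mathbb{E}_{y\sim p}\log(p[y]/q[y])=\kl(p,q)$; and (iii) the clipping relations already established for Lemma~\ref{lem:wmbound} to pass from $\kl(\bar{p}_{t,j^*},\bar{p}_t)$ back to $\kl(f(\x_t),\hat{p}_t)$.

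First I would let $\cE$ be the event that $|Lap^t_j|\le c_{pdp}'(\gamma)$ for all $t\in[T]$ and $j\in[K]$. Since each $Lap^t_j$ has scale $\frac{\log(KT)}{\epsilon}$, the Laplacian tail bound together with a union bound over the $KT$ coordinates and the definition of $c_{pdp}'(\gamma)$ in~\eqref{eq:cpdpgamma} gives $\mathrm{Pr}(\cE)\ge 1-e^{-\gamma}$. On $\cE$, the bound $-\log(KT)\le \log(\bar{p}_{t,j}[h_t(Y_t)])+\log M\le 0$ coming from~\eqref{eq:clipping}, the shift by $-c_{pdp}'(\gamma)$, and the scaling by $c_{pdp}(\gamma)=1/(\log(KT)+2c_{pdp}'(\gamma))$ together force $Z_{t,j,j}\in[0,1]$; since all other coordinates of $\mathbf{Z}_{t,j}$ are zero, $\mathbf{Z}_{t,j}\in[0,1]^K$, which is exactly the hypothesis of Lemma~\ref{lem:regretboundpdp}.

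Conditioning on $\cE$, I would apply Lemma~\ref{lem:regretboundpdp} with $i=j^*$, where $f=f_{j^*}$. Writing $Y'_t=h_t(Y_t)$ and substituting the definition~\eqref{eq:tildeypdp} of $Z_{t,j,j}$, the common additive terms $\log M - c_{pdp}'(\gamma)$ cancel because $\sum_j\tilde{w}^t_j=1$, so $\sum_j\tilde{w}^t_j Z_{t,j,j}-Z_{t,j^*,j^*}$ equals $-c_{pdp}(\gamma)$ times $\big(\sum_j\tilde{w}^t_j\log\bar{p}_{t,j}[Y'_t]-\log\bar{p}_{t,j^*}[Y'_t]+\sum_j\tilde{w}^t_j Lap^t_j-Lap^t_{j^*}\big)$. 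Dividing the expected regret bound by $-c_{pdp}(\gamma)$ (which flips the inequality) and invoking concavity of $\log$ to replace $\sum_j\tilde{w}^t_j\log\bar{p}_{t,j}[Y'_t]$ by the larger $\log\bar{p}_t[Y'_t]$ (recall $\bar{p}_t=\sum_j\tilde{w}^t_j\bar{p}_{t,j}$), I obtain
\begin{align*}
\mathbb{E}_{J^T}\Big[\sum_{t=1}^T\big(\log\bar{p}_{t,j^*}[Y'_t]-\log\bar{p}_t[Y'_t]\big)\Big]
\le{}& \frac{1}{c_{pdp}(\gamma)}\sqrt{2TK\log K}\\
&+\mathbb{E}_{J^T}\Big[\sum_{t=1}^T\sum_{j=1}^K\tilde{w}^t_j Lap^t_j-\sum_{t=1}^T Lap^t_{j^*}\Big].
\end{align*}
Taking the outer expectation over $Y'^T$ with $Y_t\sim f_{j^*}(\x_t)$, so that $Y'_t\sim \bar{p}_{t,j^*}$, and using the tower property together with $\mathbb{E}_{Y'_t}[\log(\bar{p}_{t,j^*}[Y'_t]/\bar{p}_t[Y'_t])\mid \text{history}]=\kl(\bar{p}_{t,j^*},\bar{p}_t)$ converts the left-hand side into $\mathbb{E}_{Y'^T,J^T}[\sum_t\kl(\bar{p}_{t,j^*},\bar{p}_t)]$.

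Finally, I would invoke the clipping relations established for Lemma~\ref{lem:wmbound} (the analogues of~\eqref{eq:klhp}--\eqref{eq:klhprimebarpt}): because $\hat{p}_t[y]=\sum_{y'\in\cS_{y,t}}\bar{p}_t[y']$ and $f=f_{j^*}$, these give $\kl(f(\x_t),\hat{p}_t)\le \kl(\bar{p}_{t,j^*},\bar{p}_t)+O(\log(KT)/T)$; summing over $t\in[T]$ contributes the additive $3\log(KT)$ and yields~\eqref{eq:wmboundpdp}. The step I expect to be the main obstacle is the correct bookkeeping of the two independent sources of randomness: $\tilde{w}^t_j$ depends on $(Y'^{t-1},J^{t-1},Lap^{<t})$, so the Jensen step and the KL-identity must be applied coordinatewise under the proper conditioning before the tower property collapses everything to the stated expectation. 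The Laplacian terms must be held fixed throughout this argument (they are constants on $\cE$), which is precisely why they appear verbatim on the right-hand side of~\eqref{eq:wmboundpdp} rather than vanishing.
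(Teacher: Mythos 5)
Your proposal is correct and follows essentially the same route as the paper's own proof: the same high-probability event on the Laplacians guaranteeing $\mathbf{Z}_{t,j}\in[0,1]^K$, the same invocation of Lemma~\ref{lem:regretboundpdp} with the extra expectation over $J^T$, the same Jensen/KL-identity conversion, and the same reuse of the clipping relations \eqref{eq:klhp}--\eqref{eq:klhprimebarpt} from the proof of Lemma~\ref{lem:wmbound}. No gaps.
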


We are now ready to state our main result of this section. The proof essentially follows the same steps as the proof of Theorem~\ref{thm:klriskepsdeltadp} and is therefore omitted due to space constraints.
\begin{theorem}\label{thm:klriskinteractive}
For any class $\mathcal{F}$ of size $K$, there exists an $(\epsilon,0)$-local differential private mechanism that achieves the following KL-risk $r_T^{\kl}$ upper bound
    $$O\Bigg(\sqrt{2TK\log K}\Big(\log (KT)+\frac{\log^2(KT)}{\epsilon}\Big)\Bigg).$$
\end{theorem}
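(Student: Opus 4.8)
The plan is to mirror the proof of Theorem~\ref{thm:klriskepsdeltadp} almost verbatim, substituting the bandit ingredients (Lemma~\ref{lem:regretboundpdp} and Lemma~\ref{lem:wmboundpdp}) for their full-information counterparts. Without loss of generality write $f=f_{j^*}$ for the index $j^*$ chosen by Nature. First I would define the ``good'' event $\cE$ that $|Lap_j^t|\le c_{pdp}'(\gamma)$ for all $j\in[K]$ and $t\in[T]$. By the exponential tail of the Laplacian (scale $\log(KT)/\epsilon$) together with a union bound over the $KT$ coordinates, and the definition of $c_{pdp}'(\gamma)$ in \eqref{eq:cpdpgamma}, this event holds with probability at least $1-e^{-\gamma}$, which is exactly the event under which Lemma~\ref{lem:wmboundpdp} is valid. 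I then split the KL-risk as $r_T^{\kl}(\cF)=(A)+(B)$, where $(A)=\Pr(\cE)\,\mathbb{E}_{\tilde{\boldsymbol{Y}}^T}[\sum_t\kl(f_{j^*}(\boldx_t),\hat{p}_t)\mid\cE]$ and $(B)=\Pr(\cE^c)\,\mathbb{E}_{\tilde{\boldsymbol{Y}}^T}[\sum_t\kl(f_{j^*}(\boldx_t),\hat{p}_t)\mid\cE^c]$.

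For term $(A)$, Lemma~\ref{lem:wmboundpdp} bounds the conditional expectation by $\frac{1}{c_{pdp}(\gamma)}\sqrt{2TK\log K}+3\log(KT)$ plus the residual Laplacian terms $\sum_{t}\sum_{j}\mathbb{E}_{Y'^T,J^T}[\tilde w_j^t]\,Lap_j^t-\sum_t Lap_{j^*}^t$. The key point, identical in spirit to Theorem~\ref{thm:klriskepsdeltadp}, is that these residual terms vanish once we take the remaining expectation over the Laplacians conditioned on $\cE$: since $\cE$ is a product event over the independent $Lap_j^t$, each truncated Laplacian stays mutually independent and symmetric about $0$ under $\cE$; and the averaged weight $\mathbb{E}_{Y'^T,J^T}[\tilde w_j^t]$ is a function of the Laplacians at times $s\le t-1$ only (the time-$t$ update affects predictions only at times $>t$), hence independent of $Lap_j^t$. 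Taking $\mathbb{E}[\,\cdot\mid\cE]$ term by term therefore gives $0$, and using $\Pr(\cE)\le 1$ we obtain $(A)\le \frac{1}{c_{pdp}(\gamma)}\sqrt{2TK\log K}+3\log(KT)$.

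For term $(B)$ I would reuse the deterministic clipping estimates \eqref{eq:klhp} and \eqref{eq:klhprimebarpt} from Appendix~\ref{ap:prooflem5}, which yield $\kl(f_{j^*}(\boldx_t),\hat{p}_t)\le \kl(\bar p_{t,j^*},\bar p_t)+O(\log(KT)/T)$, together with the two-sided bound \eqref{eq:clipping} forcing $\bar p_{t,j^*}[y']/\bar p_t[y']\le KT$ and hence $\kl(\bar p_{t,j^*},\bar p_t)\le\log(KT)$. Summing over $t$ and using $\Pr(\cE^c)\le e^{-\gamma}$ bounds $(B)$ by $e^{-\gamma}\big(T\log(KT)+O(\log(KT))\big)$. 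Combining the two terms gives $r_T^{\kl}(\cF)\le \frac{1}{c_{pdp}(\gamma)}\sqrt{2TK\log K}+3\log(KT)+O(e^{-\gamma}T\log(KT))$. Choosing $\gamma=\log T$ makes the last term $O(\log(KT))$, and by \eqref{eq:cpdpgamma} we have $\frac{1}{c_{pdp}(\gamma)}=\log(KT)+2c_{pdp}'(\gamma)=O\big(\log(KT)+\frac{\log^2(KT)}{\epsilon}\big)$; absorbing the lower-order additive terms into the dominant $\sqrt{2TK\log K}$ factor yields the claimed bound.

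Since the only genuinely new content relative to Theorem~\ref{thm:klriskepsdeltadp} already lives inside Lemmas~\ref{lem:regretboundpdp} and~\ref{lem:wmboundpdp}, the main obstacle is not the arithmetic of combining the two terms but confirming that the extra uniform indices $J_t$ do not disturb the residual-Laplacian cancellation in step $(A)$ — which they do not, because $J_t$ and $Lap_j^t$ are both fresh at time $t$ and influence the weights only through the time-$t$ update. It is worth emphasizing where the rates diverge from the approximate-DP proof: here the regret carries an extra $\sqrt{K}$ (the $\sqrt{2TK\log K}$ of Lemma~\ref{lem:regretboundpdp}, produced by the single-coordinate unbiased estimator whose importance weighting multiplies the per-step loss by $K$ while keeping the second-order term $\mathbb{E}_{J_t}[\tilde Y_{t,j}^2]\le 1/K$ controlled), and this $\sqrt{K}$ precisely replaces the $\sqrt{K\ln(1/\delta)}$ Laplacian-scale factor of Theorem~\ref{thm:klriskepsdeltadp}, so the final rate is unchanged up to the $\log(1/\delta)$ savings and we obtain pure $(\epsilon,0)$-DP.
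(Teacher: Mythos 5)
Your proposal is correct and follows essentially the same route as the paper: the paper explicitly omits this proof, stating that it "follows the same steps as the proof of Theorem~\ref{thm:klriskepsdeltadp}," and your argument is precisely that adaptation, with the event $\cE$, the $(A)/(B)$ decomposition, the cancellation of the conditional Laplacian terms, and the choice $\gamma=\log T$ all matching. Your added remarks on why the fresh indices $J_t$ do not disturb the cancellation and on how the extra $\sqrt{K}$ in Lemma~\ref{lem:regretboundpdp} replaces the $\sqrt{K\ln(1/\delta)}$ Laplacian scale are accurate and consistent with the paper's intent.
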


\section{Bounding Averaged TV-risk}
In this section, we show how the KL-risk upper bound for our  $(\epsilon,0)$-local differential private algorithm (Theorem \ref{thm:klriskinteractive}) can be applied to obtain bounds for the averaged TV-risk as introduced in Section~\ref{sec:problemform}.

The following result follows directly from Pinsker's inequality~\citep{pw22}.

\begin{theorem}
\label{thm:tvrisk}
    For any class $\mathcal{F}$ of size $K$, there exists a $(\epsilon,0)$-local differential private mechanism that achieves the averaged TV-risk $\bar{r}_T^{\mathsf{TV}}$ upper bounded by
    $$\tilde{O}\left(\sqrt{\frac{1}{\epsilon}\sqrt{\frac{K}{T}}}\right),$$
    for sufficiently small $\epsilon$.
\end{theorem}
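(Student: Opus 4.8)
The plan is to obtain the TV-risk bound as a direct corollary of the KL-risk bound in Theorem~\ref{thm:klriskinteractive}, using Pinsker's inequality together with two applications of Jensen's inequality for the concave map $u\mapsto\sqrt{u}$. I fix the $(\epsilon,0)$-locally differentially private mechanism and learning rule of Algorithm~\ref{alg:bandit}, whose KL-risk I already control, and bound its worst-case averaged TV-risk.

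First I would invoke Pinsker's inequality at each time step: for every $t\in[T]$ and every realization of the privatized data, $|f(\x_t)-\hat{p}_t|_{\mathsf{TV}}\le \sqrt{\tfrac12\,\kl(f(\x_t),\hat{p}_t)}$. Averaging over $t$ and applying Jensen's inequality to pull the square root outside the time-average yields, pathwise,
$$\frac{1}{T}\sum_{t=1}^T |f(\x_t)-\hat{p}_t|_{\mathsf{TV}}\le \frac{1}{T}\sum_{t=1}^T\sqrt{\tfrac12\,\kl(f(\x_t),\hat{p}_t)}\le \sqrt{\frac{1}{2T}\sum_{t=1}^T\kl(f(\x_t),\hat{p}_t)}.$$

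Next I would take the expectation over the privatization randomness $\tilde{\boldsymbol{Y}}^T$ and apply Jensen once more, now to move $\mathbb{E}$ inside the square root, obtaining
$$\mathbb{E}_{\tilde{\boldsymbol{Y}}^T}\left[\frac{1}{T}\sum_{t=1}^T |f(\x_t)-\hat{p}_t|_{\mathsf{TV}}\right]\le \sqrt{\frac{1}{2T}\,\mathbb{E}_{\tilde{\boldsymbol{Y}}^T}\left[\sum_{t=1}^T\kl(f(\x_t),\hat{p}_t)\right]}.$$
Since this holds uniformly over $f\in\mathcal{F}$ and $\x^T\in\mathcal{X}^T$, taking the supremum gives $\bar{r}^{\mathsf{TV}}_T\le\sqrt{\tfrac{1}{2T}\,r_T^{\kl}(\mathcal{F})}$ for the mechanism at hand.

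Finally I would substitute the KL-risk bound $r_T^{\kl}(\mathcal{F})=\tilde{O}\!\big(\tfrac{1}{\epsilon}\sqrt{TK}\big)$ from Theorem~\ref{thm:klriskinteractive}, keeping only the dominant $\tfrac{\log^2(KT)}{\epsilon}$ factor valid for small $\epsilon$. This yields $\bar{r}^{\mathsf{TV}}_T\le\sqrt{\tfrac{1}{2T}\cdot\tilde{O}\!\big(\tfrac{1}{\epsilon}\sqrt{TK}\big)}=\tilde{O}\!\big(\sqrt{\tfrac{1}{\epsilon}\sqrt{K/T}}\big)$, as claimed. The only delicate points are the two Jensen steps, where I must be careful that the square root is pulled out of both the time-average and the expectation in the risk-increasing direction; beyond that the argument is a routine substitution of poly-logarithmic factors, so I expect no genuine obstacle, the statement being essentially a corollary of Theorem~\ref{thm:klriskinteractive}.
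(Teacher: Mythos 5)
Your proposal is correct and follows essentially the same route as the paper: Pinsker's inequality pointwise, then pulling the square root out of the time-average (the paper uses Cauchy--Schwarz where you use Jensen, which are equivalent here), and finally substituting the KL-risk bound of Theorem~\ref{thm:klriskinteractive}. Your extra Jensen step to move the expectation over $\tilde{\boldsymbol{Y}}^T$ inside the square root is a point the paper glosses over, and it is handled correctly.
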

\begin{proof}
    Let $\hat{p}^T$ be the predictors that achieve the KL-risk bound of Theorem~\ref{thm:klriskinteractive}. For any $\x^T$ and $f\in \mathcal{F}$, we have by Pinsker's inequality~\citep{pw22} that $|f(\x_t)-\hat{p}_t|_{\mathsf{TV}}\le \sqrt{\kl(f(\x_t),\hat{p}_t)/2}$ for all $t\in [T]$. Therefore, we have
    \begin{align*}
        \frac{1}{T}\sum_{t=1}^T|f(\x_t)-\hat{p}_t|_{\mathsf{TV}}&\le \frac{1}{T}\sum_{t=1}^T\sqrt{\frac{1}{2}\kl(f(\x_t),\hat{p}_t)}\\
        &\le \frac{1}{T}\sqrt{\frac{T}{2}\sum_{t=1}^T\kl(f(\x_t),\hat{p}_t)}\\
        &\le \tilde{O}\left(\sqrt{\frac{1}{\epsilon}\sqrt{\frac{K}{T}}}\right),
    \end{align*}
    where the second inequality follows by Cauchy-Schwartz inequality and the last inequality follows by Theorem~\ref{thm:klriskinteractive}.
\end{proof}

Finally, we find $T_\alpha$ that makes the TV-risk bounded by some $\alpha$. 
Let $\mathcal{Q}=\{p_1,\cdots,p_K\}$ be $K$ distributions. We construct a class $\mathcal{F}$ of $K$ constant functions with values in $\mathcal{Q}$. Denote by $\hat{p}^T$ the predictors that achieves the upper bound of Theorem~\ref{thm:tvrisk} and $\hat{p}=\frac{1}{T}\sum_{t=1}^T\hat{p}_t$. We have by convexity of TV-distance and Jensen's inequality that for any underlying truth $p^*\in \mathcal{Q}$, $|p^*-\hat{p}|_{\mathsf{TV}}\le \tilde{O}\left(\sqrt{\frac{1}{\epsilon}\sqrt{\frac{K}{T}}}\right)$. Taking $T_\alpha=\tilde{O}\left(\frac{K}{\epsilon^2\alpha^4}\right)$ one can make the TV-risk upper bounded by $\alpha$. This can be boosted to high probability via a median trick, and therefore recovers the nearly tight upper bound in~\citet[Thm 3]{gopi2020locally} with \emph{non-interactive} mechanisms.
\begin{remark}
    We conjecture that the averaged TV-risk in Theorem~\ref{thm:tvrisk} may not be tight. We leave it as an open problem to determine if such an upper bound can be improved to $\tilde{O}(\frac{1}{\epsilon}\sqrt{\frac{K}{T}})$. 
    Note that, it was demonstrated by~\citet{gopi2020locally} via suitable comparison schemes that the sample complexity of the \emph{batch} setting is upper bounded by $\tilde{O}\left(\frac{K}{\epsilon^2\alpha^2}\right)$ using interactive private mechanisms. It is therefore interesting to investigate if such comparison based arguments can be extended to our online case.
\end{remark}
\newpage
\bibliographystyle{apalike}
\bibliography{reference}

\onecolumn
\appendix
\thispagestyle{empty}

\section{Proof of Theorem~\ref{thm:lowerbound}}
\label{app:proofthm1}
Let $N=2^n$ for some positive integer $n$ such that $\frac{N}{2}\le K\le N-1$. Let $$H^N=\begin{bmatrix}
1 & 1\\
1 & -1
\end{bmatrix}^{\otimes n}$$
be a Hadamard matrix and $H^N_{i}$ be the $i$th column of $H^N$. Let $\cY=[N]$ and consider the following $2K$ distributions:
\begin{align}\label{eq:distributionp}
p_{i,1}[y]=&\begin{cases}
    0,&\mbox{ if $H^N_{i+1,y}=1$,}\\
    \frac{2}{N},&\mbox{ if $H^N_{i+1,y}=-1$}
\end{cases}\nonumber\\
p_{i,2}[y]=& p_{i,1}[y]+\frac{H^N_{i+1,y}a}{N}, \ \ \ \ \  a=\sqrt{\frac{K}{9T\min\{(e^\epsilon-1)^2,1\}e^{\epsilon}}}
\end{align}
for $i\in[K]$, $y\in[N]$ 
Consider learning a hypothesis set $\cF=\{f_{1,1},f_{1,2},f_{2,1},\ldots,f_{K,1},f_{K,2}\}$, where $f_{i,\ell}(\boldx)=p_{i,\ell}$ for all $\boldx\in\cX$, $i\in[K]$, and $\ell\in[2]$.
 The samples $Y^T$ are i.i.d. random variables generated according to some distribution $p\in\{p_{i,\ell}\}_{i\in[K],\ell\in[2]}$. The goal is to give an estimate of $\hat{p}$ based on locally privatized data $\tilde{\boldsymbol{Y}}^T$, where $\tilde{\boldsymbol{Y}}_t=\cR_t(Y_t)$, such that the KL distance $\kl(p,\hat{p})$ is minimized. Note that one can use a private online learning predictor $\Phi^T$ to make the estimate $\hat{p}$ as follows: set $\hat{p}=\frac{1}{T}\sum_{t=1}^T\hat{p}_t$ where $\hat{p}_t=\Phi_t(\boldx^t,\tilde{\boldsymbol{Y}}^{t-1})$. Let $f_{j^*,\ell^*}$ be the ground truth function. Then
 \begin{align}\label{eq:klaverage}
 \mathbb{E}_{\tilde{\boldsymbol{Y}}^T}[\kl(p,\hat{p})]\le& \frac{\mathbb{E}_{\tilde{\boldsymbol{Y}}^T}[\sum^T_{t=1}\kl(f_{j^*,\ell^*}[\boldx_t],\hat{p}_t)]}{T}\nonumber\\
 =&\frac{r^{\kl}_T(\cF)}{T}.     
 \end{align}
In the following, we show that 
\begin{align}\label{eq:ekldistancelower}
\mathbb{E}_{\tilde{\boldsymbol{Y}}^T}[\kl(p,\hat{p}(\tilde{\boldsymbol{Y}}^T))]\ge \frac{a}{33*4}=\frac{a}{132}
\end{align}

for any estimator $\hat{p}:\tilde{\cY}^T\rightarrow \mathcal{D}([N])$, which proves the theorem.

We use the Le Cam's two point method. Note that for any $\hat{p}\in \mathcal{D}([N])$, 
\begin{align}\label{eq:twopoint}
   \max\{\kl(p_{i,1},\hat{p}),\kl(p_{i,2},\hat{p})\} \overset{(a)}{\ge}\frac{\kl(p_{i,1},\frac{p_{i,1}+p_{i,2}}{2})}{2}=\frac{\log (\frac{2}{2-\frac{a}{2}})}{2}\overset{(b)}{\ge} \frac{a}{16}, 
\end{align}
where $(a)$ follows by~\citet[Lemma C.1]{wu2023learning} and $(b)$ follows since $0\le a\le 1$ and $\log (1+x)\ge \frac{x}{2}$ when $x\le 1$. In the following we show that there exists $i\in[K]$ such that
$$|\tilde{p}^{\otimes T}_{i,1}-\tilde{p}^{\otimes T}_{i,2}|_{TV}\le \frac{1}{3},$$
where $\tilde{p}_{i,\ell}$ is the distribution of $\mathcal{R}_t(Y_t)$ when $Y_t\sim p_{i,\ell}$ for $i\in [K]$ and $\ell\in \{1,2\}$.



Let $\cR_t$ be described by conditional distribution $q_t(\Tilde{y}|y)$ for $y\in [N]$ and $\Tilde{y}\in \Tilde{\cY}$, where $\tilde{\cY}$ is the output space of $\cR_t$. For each $\tilde{y}\in\Tilde{\cY}$, let $\boldq(\Tilde{y})=(q(\Tilde{y}|1),\ldots,q(\Tilde{y}|N))$ be a vector and let 
\begin{align}\label{eq:qdecomposition}
\boldq(\Tilde{y})=\sum^N_{y=1}b(\tilde{y})_yH^N_{y}
\end{align}
where $b(\tilde{y})_y$ are the coefficients associated with the orthogonal base $\{H_y^N\}_{y\in \mathcal{Y}}$.
We have
\begin{align}\label{eq:tildep}
\tilde{p}_{i,\ell}(\tilde{y})=\sum^N_{y=1}q_t(\tilde{y}|y)p_{i,\ell}(y).
\end{align} 
From \eqref{eq:distributionp}, \eqref{eq:qdecomposition}, and \eqref{eq:tildep}, we know that
\begin{align*}
&\tilde{p}_{i,1}(\tilde{y})=\frac{2\sum_{y:H^N_{i+1,y}=-1}q_t(\Tilde{y}|y)}{N}, \mbox{ and}\\
&\tilde{p}_{i,2}(\tilde{y})=\frac{2\sum_{y:H^N_{i+1,y}=-1}q_t(\Tilde{y}|y)}{N}+a b(\tilde{y})_{i+1},
\end{align*}
for $i\in[N]\backslash\{1\}$.
The KL distance
\begin{align}\label{eq:KL}
\kl(\tilde{p}_{i,1},\tilde{p}_{i,2})&=\int_{\tilde{\cY}}-\tilde{p}_{i,1}(\tilde{y})\log \Big(\frac{\tilde{p}_{i,2}(\tilde{y})}{\tilde{p}_{i,1}(\tilde{y})}\Big)d\tilde{y}\nonumber\\
&\overset{(a)}{\le}\int_{\tilde{\cY}}-\tilde{p}_{i,1}(\tilde{y}) \Bigg(\frac{\tilde{p}_{i,2}(\tilde{y})-\tilde{p}_{i,1}(\tilde{y})}{\tilde{p}_{i,1}(\tilde{y})}-\Big(\frac{\tilde{p}_{i,2}(\tilde{y})-\tilde{p}_{i,1}(\tilde{y})}{\tilde{p}_{i,1}(\tilde{y})}\Big)^2\Bigg)d\tilde{y}\nonumber\\
&=\int_{\tilde{\cY}}\frac{(\tilde{p}_{i,2}(\tilde{y})-\tilde{p}_{i,1}(\tilde{y}))^2}{\tilde{p}_{i,1}(\tilde{y})}d\tilde{y}\nonumber\\
&\le\int_{\tilde{\cY}}\frac{a^2 b(\tilde{y})^2_{i+1}}{\Big(\min_{y}q_t(\Tilde{y}|y)\Big)}d\tilde{y},
\end{align}
where $(a)$ follows from the fact that $x-x^2\le\log(1+x)$ for $x\ge 0$.

One the other hand, we have that
\begin{align}\label{eq:parsevalinequality}
\sum_{y\in[N-1]}b(\tilde{y})^2_{y+1}&=\sum_{y\in[N-1]}\Big(\frac{\sum_{i\in[N]}q_t(\tilde{y}|i)H^N_{y+1,i}}{N}\Big)^2\nonumber\\
&\overset{(a)}{=}\frac{\sum_{i\in[N]}q_t(\tilde{y}|i)^2}{N}-\Big(\frac{\sum_{i\in[N]}q_t(\tilde{y}|i)}{N}\Big)^2\nonumber\\
&\overset{(b)}{\le}\frac{\sum_{i\in[N]}\Big(\sum_{j\in[K]}(q_t(\tilde{y}|i)-q_t(\tilde{y}|j))^2\Big)}{2N^2}\nonumber\\
&\overset{(c)}{\le}\frac{\min\{(e^\epsilon-1)^2,1\}\sum_{i\in[N]}q_t(\tilde{y}|i)^2}{2N}\nonumber\\
&\le\frac{\min\{(e^\epsilon-1)^2,1\}e^\epsilon\sum_{i\in[N]}\min_{y}q_t(\Tilde{y}|y)q_t(\tilde{y}|i)}{2N},
\end{align}
where $(a)$ follows from the Parseval's identity and the fact that $\{H_y^N\}_{y\in \mathcal{Y}}$ form orthogonal bases, $(b)$ follows from the elementary identity $N\sum_{i=1}^Na_n^2-(\sum_{i=1}^Na_n)^2=\frac{1}{2}\sum_{i,j\le N}(a_i-a_j)^2$, and $(c)$ follows frpm property of $(\epsilon,0)$-differential privacy. Now,
\eqref{eq:KL} and  \eqref{eq:parsevalinequality} imply
\begin{align*}
  \sum_{i\in[K]} \kl(\tilde{p}_{i,1},\tilde{p}_{i,2})&\le\int_{\tilde{\cY}}\frac{\sum_{i+1\in[N]}a^2 b(\tilde{y})^2_{i+1}}{\min_{y}q_t(\Tilde{y}|y)}d\tilde{y}\\
&\le a^2\int_{\tilde{\cY}}\frac{\min\{(e^\epsilon-1)^2,1\}e^\epsilon\sum_{i\in[N]}q_t(\tilde{y}|i)}{2N}d\tilde{y}\\
&\le\frac{a^2}{2}\min\{(e^\epsilon-1)^2,1\}e^{\epsilon}.
\end{align*}
Therefore, there exists $i^*\in[K]$ such that 
\begin{align*}
   \kl(\tilde{p}_{i^*,1},\tilde{p}_{i^*,2})\le \frac{a^2\min\{(e^\epsilon-1)^2,1\}e^{\epsilon}}{2K},
\end{align*}
which implies that 
\begin{align}\label{eq:tvconstantupper}
    |\tilde{p}^{\otimes T}_{i^*,1}-\tilde{p}^{\otimes T}_{i^*,2}|_{TV}\le &\sqrt{2\kl(\tilde{p}^{\otimes T}_{i^*,1},\tilde{p}^{\otimes T}_{i^*,2})}\nonumber\\
    \le& \sqrt{2T\kl(\tilde{p}_{i^*,1},\tilde{p}_{i^*,2})}\nonumber\\
    <&\frac{1}{3}.
\end{align}
The rest of the argument is standard. Let $p$ be one of $p_{i,1}$ and $p_{i,2}$ and let $\phi:\tilde{\cY}^T\rightarrow \{p_{i,1},p_{i,2}\}$ be a classification function deciding if $p$ is $p_{i,1}$ or $p_{i,2}$ based on the privatized data $\tilde{\boldsymbol{Y}}^T$. From \eqref{eq:tvconstantupper} and   Le Cam's two point lemma, the classification error
\begin{align}\label{eq:classificationerror}
\mathrm{Pr}(\phi(\tilde{\boldsymbol{Y}}^T)\ne p_{i,\ell^*})\ge\frac{1-|\tilde{p}^{\otimes T}_{i^*,1}-\tilde{p}^{\otimes T}_{i^*,2}|_{TV}}{2}\ge \frac{1}{3},
\end{align}
where $p=p_{i,\ell^*}$. 

On the other hand, suppose there exists a private hypothesis testing estimator $\hat{p}$ satisfying $\kl(p_{i,\ell^*},\hat{p}(\tilde{\boldsymbol{Y}}^T))\le \frac{a}{33}$ with probability at least $\frac{3}{4}$. Then, from \eqref{eq:twopoint} we conclude that a minimum $\kl$ distance classifier $\phi(\tilde{\boldsymbol{Y}}^T)=\mbox{argmin}_{p^*\in\{p_{i^*,1},p_{i^*,2}\}}\kl(p^*,\hat{p}(\tilde{\boldsymbol{Y}}^T))$ is correct with probability at least $\frac{3}{4}$, contradicting \eqref{eq:classificationerror}. Therefore, we have \eqref{eq:ekldistancelower}, and thus the theorem.

\section{Proof of Lemma~\ref{lem:wmbound}}
\label{ap:prooflem5}
Let $\cE$ denote the event that there exists $j\in[K]$, $t\in [T]$ such that 
\begin{align}\label{eq:Laplacianbound}
|Lap^t_j|\ge c'(\gamma),
\end{align}
where $c'(\gamma)$ is given in \eqref{eq:cgamma}. 
Note that for each $t\in[T]$ and $j\in[K]$, \eqref{eq:Laplacianbound} occurs with probability $e^{-\gamma-\log K-\log T}$, $t\in[T]$, $j\in[K]$. Hence, by the union bound, the probability of $\cE$ is at most $e^{-\gamma}$. 

In the following, we show that Lemma~\ref{lem:wmbound} holds when $\cE$ does not occur, which has probability at least $1-e^{-\gamma}$. 

Note that from \eqref{eq:clipping}, we have $\log (\bar{p}_{t,j}[h_t(Y_t)])\in [\log (\frac{1}{TKM}),\log (\frac{1}{M})]$. Hence, 
\begin{align*}
    \log \big(\bar{p}_{t,j}[h_t(Y_t)]\big)+Lap^t_j+\log M-c'(\gamma)\in [-\log (KT)-2c'(\gamma),0], 
\end{align*}
and thus that $\tilde{Y}_{t,j}\in[0,1]$.  
According to Lemma \ref{lem:regretbound}, we have 
\begin{align*}
\sum^T_{t=1}\sum^K_{j=1}\Tilde{w}^t_j\Big(-\log \big(\bar{p}_{t,j}[h_t(Y_t)]\big)-Lap^t_j\Big)-\min_{i\in[K]}\Bigg(\sum^T_{t=1}\Big(-\log \big(\bar{p}_{t,i}[h_t(Y_t)]\big)-Lap^t_i\Big) \Bigg)
\le \frac{1}{c(\gamma)}\sqrt{2T\log K}.
\end{align*}
Let $j^*\in [K]$ be any index, and assume that $Y_t\sim f_{j^*}(\x_t)$. Taking expectation over the distribution of $Y'_t\sim h_t(Y_t)$ we find that
\begin{align}\label{eq:lossupperbound}
\mathbb{E}_{Y'^T}\left[\sum^T_{t=1}\sum^K_{j=1}\Tilde{w}^t_j\kl(\bar{p}_{t,j^*},\bar{p}_{t,j})\right]
\le \frac{1}{c(\gamma)}\sqrt{2T\log K}+\sum^T_{t=1}\sum^K_{j=1}\mathbb{E}_{Y'^T}[\Tilde{w}^t_j]Lap^t_j-\sum^T_{t=1}Lap^t_{j^*},
\end{align}
where we used the fact that $Y'_t$ distributed as $\bar{p}_{t,j^*}=h_t\circ f_{j^*}(\x_t)$ and $\mathbb{E}_{Y\sim p}\log\left(\frac{p[Y]}{q[Y]}\right)=\kl(p,q)$.
Moreover, we have by Jensen's inequality and convexity of $-\log(x)$ that
\begin{align*}
\sum^T_{t=1}\Big(-\log \Big(\sum^K_{j=1}\Tilde{w}^t_j\bar{p}_{t,j}[h_t(Y_t)]\Big)\Big)-&\sum^T_{t=1}\Big(-\log \big(\bar{p}_{t,j^*}[h_t(Y_t)]\big)\Big)\nonumber\\
&\le
\sum^T_{t=1}\sum^K_{j=1}\Tilde{w}^t_j\Big(-\log \big(\bar{p}_{t,j}[h_t(Y_t)]\big)\Big)-\sum^T_{t=1}\Big(-\log \big(\bar{p}_{t,j^*}[h_t(Y_t)]\big)\Big).
\end{align*}
Hence, by taking expectation over the distribution of $Y'_t\sim h_t(Y_t)$
\begin{align}\label{eq:klleweightedkl}
\mathbb{E}_{Y'^T}\left[\sum^T_{t=1}\kl(\bar{p}_{t,j^*},\Bar{p}_t)\right]\le \mathbb{E}_{Y'^T}\left[\sum^T_{t=1}\sum^K_{j=1}\Tilde{w}^t_j\kl(\bar{p}_{t,j^*},\bar{p}_{t,j})\right].
\end{align}
Combining \eqref{eq:lossupperbound} and \eqref{eq:klleweightedkl}, we arrive at
\begin{align}\label{eq:klbarppupper}   \mathbb{E}_{Y'^T}\left[\sum^T_{t=1}\kl(\bar{p}_{t,j^*},\Bar{p}_t)\right]
    \le \frac{1}{c(\gamma)}\sqrt{2T\log K}+\sum^T_{t=1}\sum^K_{j=1}\mathbb{E}_{Y'^T}[\Tilde{w}^t_j]Lap^t_j-\sum^T_{t=1}Lap^t_{j^*}.
\end{align}
On the other hand, 
\begin{align}\label{eq:klhp}
    \kl(f_{j^*}(\mathbf{x}_t),\hat{p}_t)
    &=\sum_{y\in\mathcal{Y}}f_{j^*}(\mathbf{x}_t)[y]\log \left(\frac{f_{j^*}(\mathbf{x}_t)[y]}{\hat{p}_t[y]}\right)\nonumber\\
    &\overset{(a)}{=}\sum_{y\in\mathcal{Y}}\left(\sum_{y'\in\mathcal{S}_{y,t}}h'_t\circ f_{j^*}(\mathbf{x}_t)[y']\right)\cdot\log \left(\frac{\sum_{y'\in\mathcal{S}_{y,t}}h'_t\circ f_{j^*}(\mathbf{x}_t)[y']}{\sum_{y'\in\mathcal{S}_{y,t}}\left(\sum_{j\in[K]}\tilde{w}^t_j h_t\circ f_{j}(\mathbf{x}_t)[y']\right)}\right)\nonumber\\
    &\overset{(b)}{=}\sum_{y\in\mathcal{Y}}\sum_{y'\in\mathcal{S}_{y,t}}\left(h'_t\circ f_{j^*}(\mathbf{x}_t)[y']\cdot\log \left(\frac{h'_t\circ f_{j^*}(\mathbf{x}_t)[y']}{\sum_{j\in[K]}\tilde{w}^t_j h_t\circ f_{j}(\mathbf{x}_t)[y']}\right)\right)\nonumber\\
    &=\kl(h'_t\circ f_{j^*}(\mathbf{x}_t),\bar{p}_t),
\end{align}

where $(a)$ follows from the definition of $h'_t$ and the fact that $\hat{p}_t[y]=\sum_{y'\in \mathcal{S}_y}\bar{p}_t[y]$ where $\bar{p}_{t}=\sum_{j\in [K]}\tilde{w}_j^th_t\circ f_j(\boldx_t)$ (see Algorithm~\ref{alg:WM}), and $(b)$ follows from the fact that $h'_t\circ f_{j^*}(\boldx_t)[y']$ and $h_t\circ f_{j^*}(\boldx_t)[y']$ are \emph{constants} for all $y'\in \cS_{y,t}$. 
Moreover, 
\begin{align}\label{eq:klhprimebarpt}
  &\kl(h'_t\circ f_{j^*}(\boldx_t),\Bar{p}_t)\nonumber\\
  &=\sum_{y'\in[N'_t]}h'_t\circ f_{j^*}(\boldx_t)[y']\log \Big(h'_t\circ f_{j^*}(\boldx_t)[y']\Big) - \sum_{y'\in[N'_t]}h'_t\circ f_{j^*}(\boldx_t)[y']\log \Big(\Bar{p}_t[y']\Big)\nonumber\\
  &=\sum_{y'\in[N'_t]}h'_t\circ f_{j^*}(\boldx_t)[y']\log \Bigg(\Big(1-\frac{1}{T}\Big)h'_t\circ f_{j^*}(\boldx_t)[y']\Bigg) - \sum_{y'\in[N'_t]}h'_t\circ f_{j^*}(\boldx_t)[y']\log \Big(\Bar{p}_t[y']\Big) - \log \Big(1-\frac{1}{T}\Big)\nonumber\\
  &\le \sum_{y'\in[N'_t]}h'_t\circ f_{j^*}(\boldx_t)[y']\cdot\log \Bigg(\Big(1-\frac{1}{T}\Big)h'_t\circ f_{j^*}(\boldx_t)[y'] + \frac{1}{TN'_t}\Bigg)-\sum_{y'\in[N'_t]}h'_t\circ f_{j^*}(\boldx_t)[y']\log \Bar{p}_t[y'] - \log \Big(1-\frac{1}{T}\Big)\nonumber\\
  &=\sum_{y'\in[N'_t]}\Bigg(\Big(1-\frac{1}{T}\Big)h'_t\circ f_{j^*}(\boldx_t)[y'] + \frac{1}{TN'_t}\Bigg)\cdot\log \Bigg(\frac{\Big(1-\frac{1}{T}\Big)h'_t\circ f_{j^*}(\boldx_t)[y'] + \frac{1}{TN'_t}}{\Bar{p}_t[y']}\Bigg) \nonumber\\
  &\qquad+\sum_{y'\in[N'_t]}\Bigg(\frac{1}{T}h'_t\circ f_{j^*}(\boldx_t)[y'] - \frac{1}{TN'_t}\Bigg)\cdot\log \Bigg(\frac{\Big(1-\frac{1}{T}\Big)h'_t\circ f_{j^*}(\boldx_t)[y'] + \frac{1}{TN'_t}}{\Bar{p}_t[y']}\Bigg) - \log \Big(1-\frac{1}{T}\Big)\nonumber\\
  &=\kl(\bar{p}_{t,j^*},\bar{p}_t) + \sum_{y'\in[N'_t]}\Bigg(\frac{1}{T}h'_t\circ f_{j^*}(\boldx_t)[y'] - \frac{1}{TN'_t}\Bigg)\cdot\log \Bigg(\frac{\bar{p}_{j^*}[y']}{\Bar{p}_t[y']}\Bigg) - \log \Big(1-\frac{1}{T}\Big)\nonumber\\
  &\overset{(a)}{\le}\kl(\bar{p}_{t,j^*},\bar{p}_t) + \frac{\log (KT)}{T} + \frac{\log (KT)}{KTM} - \log \Big(1-\frac{1}{T}\Big)
\end{align}

where $(a)$ follows because of \eqref{eq:clipping} and the fact that $\bar{p}_t$ is a linear combination of $\bar{p}_j$, $j\in[K]$.
Combining \eqref{eq:klbarppupper}, \eqref{eq:klhp}, and \eqref{eq:klhprimebarpt}, we have
\begin{align*}
    \mathbb{E}_{Y'^T}\left[\sum^T_{t=1}\kl(f_{j^*}(\boldx_t),\hat{p}_t)\right]&=\mathbb{E}_{Y'^T}\left[\sum^T_{t=1} \kl(h'_t\circ f_{j^*}(\boldx_t),\Bar{p}_t)\right]\\
    &\le\mathbb{E}_{Y'^T}\left[\sum^T_{t=1}\kl(\bar{p}_{t,j^*},\bar{p}_t)-T\log (1-\frac{1}{T})+2\log (KT)\right]\\
    &\le \frac{1}{c(\gamma)}\sqrt{2T\log K}+3\log (KT)+\sum^T_{t=1}\sum^K_{j=1}\mathbb{E}_{Y'^T}[\Tilde{w}^t_j]Lap^t_j-\sum^T_{t=1}Lap^t_{j^*}.
\end{align*}
This completes the proof.

\section{Proof of Lemma~\ref{lem:wmboundpdp}}
\label{app:prooflem8}
The proof follows the footsteps of the proof of Lemma \ref{lem:wmbound} by using Lemma \ref{lem:regretboundpdp}. 
Let $\cE$ denote the event that there exists $j\in[K]$, $t\in [T]$ such that 
\begin{align}\label{eq:Laplacianboundpdp}
|Lap^t_j|\ge c_{pdp}'(\gamma).
\end{align}
Then, the probability of $\cE$ is at most $e^{-\gamma}$. 
We show that Lemma~\ref{lem:wmboundpdp} holds when $\cE$ does not occur. 
Note that from \eqref{eq:clipping}, we have $\log (\bar{p}_{t,j}[h_t(Y_t)])\in [\log (\frac{1}{TKM}),\log (\frac{1}{M})]$. Therefore, 
\begin{align*}
    \log \big(\bar{p}_{t,j}[h_t(Y_t)]\big)+Lap^t_j+\log M-c'(\gamma)\in [-\log (KT)-2c'(\gamma),0], 
\end{align*}
and thus that $\mathbf{Z}_{t,j}\in[0,1]^K$ for all $t\in [T]$ and $j\in [K]$.

Invoking Lemma \ref{lem:regretboundpdp}, we have
\begin{align}\label{eq:lossupperboundpdp}
\max_{i\in [K]}\mathbb{E}_{J^T}\left[\sum^T_{t=1}\left(\sum^K_{j=1}\Tilde{w}^t_jZ_{t,j,j}-\sum^T_{t=1}Z_{t,i,i}\right)\right]&= \max_{i\in [K]}\mathbb{E}_{J^T}\Bigg[\sum^T_{t=1}\Big(\sum^K_{j=1}\Tilde{w}^t_j(-c_{pdp}(\gamma)(\log \big(\bar{p}_{t,j}[h_t(Y_t)]\big)+Lap^t_j))\nonumber\\
&\qquad -c_{pdp}(\gamma)(\log \big(\bar{p}_{t,j}[h_t(Y_t)]\big)+Lap^t_j)\Big)\Bigg]\nonumber\\
&\le \sqrt{2TK\log K}.
\end{align}
Let $j^*\in [K]$ be any index and assume $Y_t\sim f_{j^*}(\x_t)$. Taking expectations over the distributions of $Y'_t\sim h_t\circ f_{j^*}(\x_t)$ for $t\in[T]$, we find (similar to~\eqref{eq:lossupperbound})
\begin{align}\label{eq:kllossbandit}
\mathbb{E}_{Y'^T,J^T}\left[\sum^T_{t=1}\sum^K_{j=1}\Tilde{w}^t_j\kl(\bar{p}_{t,j^*},\bar{p}_{t,j})\right]\le \frac{1}{c_{pdp}(\gamma)}\sqrt{2TK\log K} + \sum^T_{t=1}\sum^K_{j=1}\mathbb{E}_{Y'^T,J^T}\left[\Tilde{w}^t_j\right]Lap^t_j-\sum^T_{t=1}Lap^t_{j^*}.
\end{align}
Combining \eqref{eq:klleweightedkl}, \eqref{eq:klhp},  \eqref{eq:klhprimebarpt}, and \eqref{eq:kllossbandit}, we conclude
\begin{align*}
    \mathbb{E}_{Y'^T,J^T}\left[\sum^T_{t=1}\kl(f_{j^*}(\boldx_t),\hat{p}_t)\right]\le \frac{1}{c_{pdp}(\gamma)}\sqrt{2TK\log K}+3\log(KT)+\sum^T_{t=1}\sum^K_{j=1}\mathbb{E}_{Y'^T,J^T}\left[\Tilde{w}^t_j\right]Lap^t_j-\sum^T_{t=1}Lap^t_{j^*}.
\end{align*}
This completes the proof.

\vfill
\end{document}